\documentclass[times, utf8, diplomski, english]{fer}
\usepackage{booktabs}
\usepackage{amsmath}
\usepackage{amsthm}
\usepackage{mathtools}
\usepackage{float}
\usepackage{varwidth} 
\usepackage{subcaption}
\usepackage{graphicx}
\usepackage[toc,page]{appendix}
\usepackage[percent]{overpic}

\usepackage{hyperref}
\hypersetup{
    colorlinks,
    citecolor=black,
    filecolor=black,
    linkcolor=black,
    urlcolor=black
}

\graphicspath{{img/}}
\usepackage{tikz}

\usetikzlibrary{
	cd,
	shapes.geometric,
	decorations.markings,
	decorations.pathmorphing,
	positioning,
	arrows,
	shapes,
	calc,
	fit,
	quotes}

\tikzstyle{none}=[inner sep=0pt]
\tikzstyle{ibox}=[draw, rounded corners, minimum width = 30pt, minimum height =
18pt, thick]
\tikzstyle{update}=[->,>=stealth, very thick,decoration={snake, pre length =3pt, post
length =3pt},decorate]

\tikzset{
   oriented WD/.style={
      every to/.style={out=0,in=180,draw},
      label/.style={
         font=\everymath\expandafter{\the\everymath\scriptstyle},
         inner sep=0pt,
         node distance=2pt and -2pt},
      semithick,
      node distance=1 and 1,
      decoration={markings, mark=at position .5 with {\arrow{stealth};}},
      ar/.style={postaction={decorate}},
      execute at begin picture={\tikzset{
         x=\bbx, y=\bby,
         every fit/.style={inner xsep=\bbx, inner ysep=\bby}}}
      },
   bbx/.store in=\bbx,
   bbx = 1.5cm,
   bby/.store in=\bby,
   bby = 1.75ex,
   bb port sep/.store in=\bbportsep,
   bb port sep=2,
   bb port length/.store in=\bbportlen,
   bb port length=0pt,
   bb penetrate/.store in=\bbpenetrate,
   bb penetrate=0pt,
   bb min width/.store in=\bbminwidth,
   bb min width=1cm,
   bb rounded corners/.store in=\bbcorners,
   bb rounded corners=5pt,
   bb small/.style={bb port sep=1, bb port length=2.5pt, bbx=.4cm, bb min width=.4cm, bby=.7ex},
   bb/.code 2 args={
      \pgfmathsetlengthmacro{\bbheight}{\bbportsep * (max(#1,#2)) * \bby}
      \pgfkeysalso{draw,minimum height=\bbheight,minimum width=\bbminwidth,outer sep=0pt,
         rounded corners=\bbcorners,thick,
         prefix after command={\pgfextra{\let\fixname\tikzlastnode}},
         append after command={\pgfextra{\draw
            \ifnum #1=0{} \else foreach \i in {1,...,#1} {
               ($($(\fixname.north
	       west)+(0,.9\bbportsep)$)!{\i/(#1+1)}!($(\fixname.south
	       west)-(0,.9\bbportsep)$)$)
	       +(-\bbportlen,0) coordinate (\fixname_in\i) -- +(\bbpenetrate,0) coordinate (\fixname_in\i')}\fi 
            \ifnum #2=0{} \else foreach \i in {1,...,#2} {
               ($($(\fixname.north
	       east)+(0,\bbportsep)$)!{\i/(#2+1)}!($(\fixname.south
	       east)-(0,\bbportsep)$)$) +(-\bbpenetrate,0) coordinate (\fixname_out\i') -- +(\bbportlen,0) coordinate (\fixname_out\i)}\fi;
         }}}
   },
   bb name/.style={append after command={\pgfextra{\node[anchor=north] at
(\fixname.north) {#1};}}},
   ibb port sep/.store in=\ibbportsep,
   ibb port sep=2,
   ibb port length/.store in=\ibbportlen,
   ibb port length=4pt,
   ibb min width/.store in=\ibbminwidth,
   ibb min width=1cm,
   ibb rounded corners/.store in=\ibbcorners,
   ibb rounded corners=1pt,
   ibb small/.style={ibb port sep=1, ibb port length=2.5pt, bbx=.4cm, ibb min width=.4cm, bby=.7ex},
   ibb/.code 2 args={
	   \pgfmathsetlengthmacro{\ibbheight}{\ibbportsep * (max(#1,#2)) * \bby}
	   \pgfkeysalso{draw,color=gray!50,minimum height=\ibbheight,minimum width=\ibbminwidth,outer sep=0pt,
		   rounded corners=\ibbcorners,thick,
		   prefix after command={\pgfextra{\let\fixname\tikzlastnode}},
		   append after command={\pgfextra{\coordinate
			   \ifnum #1=0{} \else foreach \i in {1,...,#1} {
				   ($($(\fixname.north
					west)+(0,.9\ibbportsep)$)!{\i/(#1+1)}!($(\fixname.south
						west)-(0,.9\ibbportsep)$)$)
					   +(-\ibbportlen,0) coordinate (\fixname_in\i) -- +(\ibbportlen,0) coordinate (\fixname_in\i')}\fi 
					   \ifnum #2=0{} \else foreach \i in {1,...,#2} {
						   ($($(\fixname.north
							east)+(0,\ibbportsep)$)!{\i/(#2+1)}!($(\fixname.south
								east)-(0,\ibbportsep)$)$) +(-\ibbportlen,0) coordinate (\fixname_out\i') -- +(\ibbportlen,0) coordinate (\fixname_out\i)}\fi;
		   }}}
   },
   ibb name/.style={append after command={\pgfextra{\node[anchor=north] at
   (\fixname.north) {#1};}}},
   blankbb port sep/.store in=\blankbbportsep,
   blankbb port sep=2,
   blankbb min width/.store in=\blankbbminwidth,
   blankbb min width=1cm,
   blankbb rounded corners/.store in=\blankbbcorners,
   blankbb rounded corners=1pt,
   blankbb small/.style={blankbb port sep=1, blankbb port length=2.5pt, bbx=.4cm, blankbb min width=.4cm, bby=.7ex},
   blankbb/.code 2 args={
	   \pgfmathsetlengthmacro{\blankbbheight}{\blankbbportsep * (max(#1,#2)) * \bby}
	   \pgfkeysalso{draw,color=gray!50,minimum height=\blankbbheight,minimum width=\blankbbminwidth,outer sep=0pt,
		   rounded corners=\blankbbcorners,thick,
		   prefix after command={\pgfextra{\let\fixname\tikzlastnode}},
		   append after command={\pgfextra{\draw
			\ifnum #1=0{} \else foreach \i in {1,...,#1} {
			   ($($(\fixname.north
			   west)+(0,.9\ibbportsep)$)!{\i/(#1+1)}!($(\fixname.south
			   west)-(0,.9\ibbportsep)$)$)
					    coordinate (\fixname_in\i)}\fi 
			\ifnum #2=0{} \else foreach \i in {1,...,#2} {
			   ($($(\fixname.north
			   east)+(0,.9\ibbportsep)$)!{\i/(#2+1)}!($(\fixname.south
			   east)-(0,.9\ibbportsep)$)$) coordinate (\fixname_out\i)}\fi;
		   }}}
   },
   blankbb name/.style={append after command={\pgfextra{\node[anchor=north] at
     (\fixname.north) {#1};}}},
   symbb port sep/.store in=\symbbportsep,
   symbb port sep=2,
   symbb port length/.store in=\symbbportlen,
   symbb port length=0pt,
   symbb min width/.store in=\symbbminwidth,
   symbb min width=1cm,
   symbb rounded corners/.store in=\symbbcorners,
   symbb rounded corners=5pt,
   symbb small/.style={symbb port sep=1, symbb port length=2.5pt, symbbx=.4cm, symbb min width=.4cm, symbby=.7ex},
   symbb/.code 2 args={
      \pgfmathsetlengthmacro{\symbbheight}{\symbbportsep * (max(#1,#2)) * \bby}
      \pgfkeysalso{draw,minimum height=\symbbheight,minimum width=\symbbminwidth,outer sep=0pt,
         rounded corners=\symbbcorners,thick,
         prefix after command={\pgfextra{\let\fixname\tikzlastnode}},
         append after command={\pgfextra{\draw
            \ifnum #1=0{} \else foreach \i in {1,...,#1} {
               ($($(\fixname.north
	       west)+(0,.9\symbbportsep)$)!{\i/(#1+1)}!($(\fixname.south
	       west)-(0,.9\symbbportsep)$)$)
	       +(-\symbbportlen,0) coordinate (\fixname_in\i) -- +(\symbbportlen,0) coordinate (\fixname_in\i')}\fi 
            \ifnum #2=0{} \else foreach \i in {1,...,#2} {
               ($($(\fixname.north
	       east)+(0,.9\symbbportsep)$)!{\i/(#2+1)}!($(\fixname.south
	       east)-(0,.9\symbbportsep)$)$) +(-\symbbportlen,0) coordinate (\fixname_out\i') -- +(\symbbportlen,0) coordinate (\fixname_out\i)}\fi;
         }}}
   },
   symbb name/.style={append after command={\pgfextra{\node[anchor=north] at
(\fixname.north) {#1};}}},
}

\newcommand{\LMO}[2][over]{\ifthenelse{\equal{#1}{over}}{\overset{#2}{\bullet}}{\underset{#2}{\bullet}}}
\newcommand{\LTO}[2][\bullet]{\overset{\tn{#2}}{#1}}
\newcommand{\tn}[1]{\textnormal{#1}}
\newcommand{\boxCD}[2][black]{\fcolorbox{#1}{white}{\begin{varwidth}{\textwidth}\centering #2\end{varwidth}}}

\newcommand{\cat}[1]{\mathcal{#1}}
\newcommand{\Cat}[1]{\mathbf{#1}}
\newcommand{\Fun}[1]{#1}

\newcommand{\RR}{\mathbb{R}}
\newcommand{\free}{\Cat{Free}}

\newcommand{\euc}{\mathbf{Euc}}

\newcommand{\gen}[1]{\mathsf{Gen}_{#1}}

\newcommand{\set}{\Cat{Set}}
\newcommand{\zo}{[0, 1]^{100}}
\newcommand{\cc}{\cat{C}}
\newcommand{\dd}{\cat{D}}

\newcommand{\vv}{\cat{V}}
\newcommand{\discr}[1]{\vert{#1}\vert}
\newcommand{\ccv}{\discr{\cc}}
\newcommand{\gimg}{[0, 1]^{64 \times 64 \times 3}}
\newcommand{\sgzo}{[0, 1]^{512}}
\newcommand{\sgimg}{[0, 1]^{1024 \times 1024 \times 3}}
\newcommand{\ih}{I_{H_p}}

\newcommand{\fr}{\free(G)}
\newcommand{\qfr}{{\free(G)/\sim}}
\newcommand{\vfr}{\discr{\fr}}

\newcommand{\para}{\Cat{Para}}
\newcommand{\parae}{\para(\Cat{Euc})}
\newcommand{\parav}{\para(\vv)}

\newcommand{\ps}{\mathcal{P}}

\newcommand{\emb}{E}
\newcommand{\cpt}{\mathfrak{C}}
\newcommand{\task}{{(G, \sim, \vfr \xrightarrow{D_E} \set)}}

\newcommand{\bd}{\mathbf{D}}
\newcommand{\bg}{\mathbf{G}}

\newcommand{\arch}{\mathsf{Arch}}
\newcommand{\pspec}{\mathsf{PSpec}}
\newcommand{\model}{\mathsf{Model}}

\newtheorem{theorem}{Theorem}
\newtheorem{proposition}[theorem]{Proposition}
\newtheorem{example}[theorem]{Example}

\newtheorem{definition}[theorem]{Definition}
\newtheorem{remark}[theorem]{Remark}

\newtheorem{corollary}[theorem]{Corollary}

\makeatletter
\newcommand*{\doublerightarrow}[2]{\mathrel{
  \settowidth{\@tempdima}{$\scriptstyle#1$}
  \settowidth{\@tempdimb}{$\scriptstyle#2$}
  \ifdim\@tempdimb>\@tempdima \@tempdima=\@tempdimb\fi
  \mathop{\vcenter{
    \offinterlineskip\ialign{\hbox to\dimexpr\@tempdima+1em{##}\cr
    \rightarrowfill\cr\noalign{\kern.5ex}
    \rightarrowfill\cr}}}\limits^{\!#1}_{\!#2}}}
\newcommand*{\triplerightarrow}[1]{\mathrel{
  \settowidth{\@tempdima}{$\scriptstyle#1$}
  \mathop{\vcenter{
    \offinterlineskip\ialign{\hbox to\dimexpr\@tempdima+1em{##}\cr
    \rightarrowfill\cr\noalign{\kern.5ex}
    \rightarrowfill\cr\noalign{\kern.5ex}
    \rightarrowfill\cr}}}\limits^{\!#1}}}
\makeatother

\setcounter{tocdepth}{1}
\begin{document}

\thesisnumber{2034}

\title{Compositional Deep Learning}

\author{Bruno Gavranović}

\maketitle


\zahvala{I've had an amazing time these last few years. I've had my eyes opened
  to a new and profound way of understanding the world and learned a bunch of
  category theory.
  This thesis was shaped with help of a number of people who I owe my gratitude to.
  I thank my advisor Jan Šnajder for introducing me to machine learning, Haskell and
  being a great advisor throughout these years.
  I thank Martin Tutek and Siniša Šegvić who have been great help for discussing
  matters related to deep learning and for proofchecking early versions of these ideas.
  David Spivak has generously answered many of my questions about categorical concepts related to this thesis.
  I thank Alexander Poddubny for stimulating conversations and valuable
  theoretic insights, and guidance in thinking about these things without whom many of the constructions in this thesis would not be in their current form.
  I also thank Mario Roman, Carles Sáez, Ammar Husain, Tom Gebhart for valuable
  input on a rough draft of this thesis.

  Finally, I owe everything I have done to my brother and my parents for their
  unconditional love and understanding throughout all these years. Thank you.
}

\tableofcontents

\chapter{Introduction}

Our understanding of intelligent systems which augment and automate various
aspects of our cognition has seen rapid progress in recent decades. Partially
prompted by advances in hardware, the field of study of multi-layer artificial neural
networks -- also known as \textit{deep learning} -- has seen astonishing
progress, both in terms of theoretical advances and practical integration with
the real world. Just as mechanical muscles spawned by the Industrial revolution automated significant
portions of human manual labor, so are mechanical minds brought forth by modern deep
learning showing the potential to automate aspects of cognition and pattern
recognition previously thought to have been unique only to humans and animals.

In order to design and scale such sophisticated systems, we need to take extra care when managing their complexity. As with all modern software, their design needs to be done with the principle of \textit{compositionality} in mind.
Although at a first glance it might seem like an interesting yet obscure concept,
the notion of compositionality is at the heart of modern computer science,
especially type theory and functional programming.

Compositionality describes and quantifies how complex things can be
assembled out of simpler parts.
It is a principle which tells us that the design of abstractions
in a system needs to be done in such a way that we can intentionally forget their
internal structure \citep{OnCompositionality}.
This is tightly related to the notion of a leaky abstraction
\citep{LeakyAbstractions} -- a system whose internal design affects its users in
ways not specified by its interface.

Indeed, going back to deep learning, we observe two interesting
properties of neural networks related to compositionality: (i) they are
compositional – increasing the number of layers tends to yield better
performance, and (ii) they are discovering (compositional) structures in data.
Furthermore, an increasing number of components of a modern deep learning system is learned.
For instance, Generative Adversarial Networks \citep{GAN} learn the \textit{cost
  function}. The paper \textit{Learning to Learn by gradient descent by gradient
descent} \citep{LTL} specifies networks that learn the \textit{optimization function}. The
paper \textit{Decoupled Neural Interfaces using Synthetic Gradients}
\citep{SyntheticGradients} specifies how gradients themselves can be learned.
The neural network system in \citep{SyntheticGradients} can be thought of as a cooperative multi-player game, where some players depend on other ones to learn but can be trained in an asynchronous manner.

These are just rough examples, but they give a sense of things to come.
As more and more components of these systems stop being fixed throughout
training, there is an increasingly larger need for more precise formal
specification of the things that \textit{do} stay fixed.
This is not an easy task; the invariants across all these networks seem to be
rather abstract and hard to describe.

In this thesis we explore the hypothesis that category theory -- a formal
language for describing general abstract structures in mathematics -- could
be well suited to describe these systems in a precise manner.
In what follows we lay out the beginnings of a formal compositional framework
for reasoning about a number of components of modern deep learning architectures. As such the general aim of is thesis is
twofold. Firstly, we translate a collection of abstractions known to machine learning
practitioners into the language of category theory. By doing so we hope to
uncover and disentangle some of the rich conceptual structure underpinning
gradient-based optimization and provide mathematicians with some interesting new problems to solve.
Secondly, we use this abstract category-theoretic framework to conceive a new and practical way to train neural networks and to perform a novel task of
object deletion and insertion in images with unpaired data.

The rest of the thesis is organized as follows.
In Chapter \ref{ch:background} we outline some recent work in neural networks
and provide a sense of depth to which category theory is used in this thesis.
We also motivate our approach by noting a surprising correspondence between a class of neural network architectures and database systems.
Chapter \ref{ch:cat_deep_learning} contains the meat of the thesis and most of
the formal categorical structure. We provide a notion of generalization of
parametrization using the construction we call $\para$. Similarly, provision of categorical
analogues of neural network architectures as functors allows us to generalize
parameter spaces of these network architectures to parameter space of
\textit{functors}.
This chapter concludes with a description of the optimization process in such a
setting. In Chapter \ref{ch:examples} we show how existing
neural network architectures fit into this framework and we conceive a novel network
architecture. In the final chapter we report our
experiments of this novel neural network architecture on some concrete datasets.

%

\chapter{Background}\label{ch:background}

In this chapter we give a brief overview of the necessary background related to
neural networks and category theory, along with an outline of the used notation and
terminology. In Section \ref{sec:motivation} we motivate our approach by
informally presenting categorical database systems as they pertain to the topic
of this thesis. Lastly, we outline the main results of the thesis.

\section{Neural Networks}

Neural networks have become an increasingly popular tool for solving many
real-world problems. They are a general framework for differentiable optimization which includes many other machine learning approaches as special
cases.

Recent advances in neural networks describe and
quantify the process of discovering high-level, abstract structure in data using
gradient information. As such, learning inter-domain mappings has received
increasing attention in recent years, especially in the context of
\textit{unpaired data} and image-to-image translation \citep{CycleGAN,
  AugmentedCycleGAN}.
\textit{Pairedness} of datasets $X$ and $Y$ generally refers to the existence of some
invertible function $X \rightarrow Y$. Note that in classification we might also
refer to the input dataset as being paired with the dataset of labels, although the
meaning is slightly different as we cannot obviously invert a label $f(x)$ for
some $x \in X$. 

Obtaining this pairing information for datasets usually requires
considerable effort.
Consider the task of object removal from images; obtaining pairs of images where one of
them lacks a certain object, with everything else the same, is much more
difficult than the mere task of obtaining two sets of images: one that contains
that object and one that does not, with everything in these images possibly varying.
Moreover, we further motivate this example by the reminiscence of the way humans
reason about the missing object: simply by observing two unpaired sets of
images, where we are told one set of images lack an object, we are able to learn how the missing object looks like.

There is one notable neural network architecture related to generative modelling
-- Generative Adversarial Networks (GANs) \citep{GAN}.
Generative Adversarial Networks present a radically different approach to training neural networks.
A GAN is a generative model which is a composition of two networks, one called
\textit{the generator} and one called \textit{the discriminator}.
Instead of having the cost function fixed, GAN \textit{learns} the cost function
using the discriminator.
The generator and the discriminator have opposing goals and are trained in an alternating fashion where both
continue improving until the generator learns the underlying data distribution.
GANs show great potential for the development of accurate generative models for complex distributions, such as the distribution of images of written digits or faces. Consequently, in just a few years, GANs have grown into a major topic of research in machine learning.

Motivated by the success of Generative Adversarial Networks in image
generation, existing unsupervised learning methods such as CycleGAN
\citep{CycleGAN} and Augmented CycleGAN \citep{AugmentedCycleGAN} use adversarial
losses to learn the true data distribution of given domains of natural
images and \textit{cycle-consistency} losses to learn \textit{coherent} mappings between those domains.
CycleGAN is an architecture which learns a one-to-one mapping between two domains.
Each domain has an associated \textit{discriminator}, while the mappings between these domains correspond to \textit{generators}. The generators in CycleGAN are a collection of neural networks which is closed under composition, and whose inductive bias is increased by enforcing composition invariants, i.e.~cycle-consistencies. 

Augmented CycleGAN \citep{AugmentedCycleGAN} notices that most relationships
across domains are more complex than simple isomorphisms. It is a
generalization of CycleGAN which learns \textit{many-to-many} mappings between
two domains. Augmented CycleGAN augments each domain with an auxiliary latent
variable and extends the training procedure to these augmented spaces.

\section{Category Theory}

This work builds on the foundations of a branch of
mathematics called Category theory. Describing category theory in just a few
paragraphs is not an easy task as there exist a large number of
equally valid vantage points to observe it from \citep{VantagePoints}.
Rather, we give some intuition and show how it is becoming an unifying force
throughout sciences \citep{RosettaStone}, in all the places in which we need to reason about compositionality.

First and foremost, category theory is a language - a rigorous and a formal one. 
We mean this in the full definition of the word \textit{language} -- it enables
us to specify and communicate complex ideas in a succinct manner. Just as any language -- it \textit{guides and structures thought}.


It is a toolset for describing general abstract structures in mathematics.
Called also ``the architecture of mathematics''
\citep{HigherDimensionalCategoryTheory}, it can be regarded as the \textit{theory of
  theories}, a tool for organizing and layering abstractions and finding formal connections between seemingly disparate fields \citep{SevenSketches}.
Originating in algebraic topology, it has not been designed with the
compositionality in mind.
However, category theory seems to be deeply rooted in all the places we need to
reason about composition.

As such, category theory is slowly finding applications outside of the realm of
pure mathematics. Same categorical structures have been emerging across the sciences: in Bayesian networks \citep{BayesianCTFong, BayesianCTCulbertson}, database systems
\citep{DatabaseOfCategories, RelationalDatabasesIndexedCategories,
  FunctorialDataMigration, AlgebraicDatabases}, version control
systems \citep{PatchTheory}, type theory \citep{CTTypeTheory}, electric circuits
\citep{ElectricCircuitsCT}, natural language processing \citep{NLPCT, NLPCT2},
game theory \citep{CompositionalGameTheory, FunctorialLanguageGames,
  MorphismsOfOpenGames, CoherenceOpenGames}, and automatic differentiation
\citep{SimpleAD}, not to mention its increased use in quantum physics
\citep{PicturingQuantumProcesses, QPhysics1, CategoricalQuantumMechanics}.

In the context of this thesis we focus on categorical formulations of neural networks \citep{BackpropAsFunctor, LearningInvariantsCT, SimpleAD} and databases \citep{FunctorialDataMigration}.
Perhaps the most relevant to this thesis is compositional formulation of
supervised learning found in \citep{BackpropAsFunctor}, whose construction
$\para$ we generalize in Section \ref{sec:parametricity}.

\subsection{Assumptions, Notation, and Terminology}

We assume a working knowledge of fundamental category theory.
Although most of the notation we use is standard, we outline some of the conventions here.

For any category $\cc$ we denote the set of its objects with $Ob(\cc)$ and
individual objects using uppercase letters such as $A, B$, and $C$.
The hom-set of morphisms between two objects $A$ and $B$ in a category $\cc$ is written
as $\cc(A, B)$.
When we want to consider a discretization of a category $\cc$ such that the only
morphisms are the identity morphisms we will write $\ccv$.
We use $A \subseteq B$ to denote $A$ is a subset of $B$, but also more generally
to denote $A$ is a subobject of $B$.
Given some function $f : P \times A \rightarrow B$ and a $p \in P$, we write a
partial application of the $p$ to the first argument of $f$ as $f(p, -) : A
\rightarrow B$.

Given categories $\cc$ and $\dd$, we write $\dd^{\cc}$ for the functor category whose
objects are functors $\cc \rightarrow \dd$ and morphisms are natural
transformations between such functors.

When talking about a monoidal category $\cc$ we will use $\otimes: \cc \times
\cc \rightarrow \cc$ for the monoidal product, $I \in Ob(\cc)$ for the unit
object, $\alpha_{A, B, C}: (A \otimes B) \otimes C \rightarrow A \otimes (B
\otimes C)$ for the associator, and $\lambda_A: I \otimes A \rightarrow A$ for the left unitor.

A notable category we use is $\euc$, the strict symmetric monoidal category
whose objects are finite-dimensional Euclidean spaces and morphisms are
differentiable maps. A monoidal product on $\euc$ is given by the Cartesian product.

Given a directed multigraph $G$, we will write $\fr$ for the free category on that
graph $G$ and $\qfr$ for its quotient category by some congruence relation $\sim$.
Lastly, given a category $\cc$ with generators $G$, we write
$\gen{\cc}$ for the set of generating morphisms $G$ in $\cc$.

\section{Database Schemas as Neural Network Schemas}\label{sec:motivation}

In this section we motivate our approach by highlighting a remarkable correspondence
between database systems, as defined by \citet{FunctorialDataMigration}, and a
class of neural network architectures, here exemplified by CycleGAN
\citep{CycleGAN}, but developed in generality in Chapter \ref{ch:cat_deep_learning}.

Our aim in this section is to present an informal, high-level overview of this correspondence. We hope the emphasized intuition in this section serves as a guide for Chapter \ref{ch:cat_deep_learning} where these structures are described in a formal manner.

The categorical formulation of databases found in
\citep{FunctorialDataMigration} can roughly be described as follows.
A database is modelled as a category which holds just the abstract relationship
between concepts, and a structure-preserving map into another category which holds
the actual data corresponding to those concepts. That is, a database
\textit{schema} $\cc$ specifies a reference structure for a given database instance.
An example, adapted from \cite{SevenSketches}, is shown in Figure
\ref{fig:beatles_schema}.

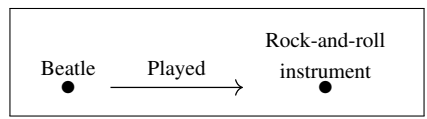
\begin{figure}[H]
  \centering
  \[
    \fbox{
      \begin{tikzcd}[ampersand replacement=\&, column sep=50pt]
        \LTO{Beatle}\ar[r, "\text{Played}"]\&\LTO{\parbox{.7in}{\centering
            Rock-and-roll\\\vspace{-.1in}instrument}}
      \end{tikzcd}
    }
  \]
  
  \caption{Toy example of a database schema}
  \label{fig:beatles_schema}
\end{figure}

Actual data corresponding to such a schema is a functor $\cc \rightarrow \set$, shown in Figure \ref{tab:beatles_data} as a system of interlocking tables. 

\begin{table}[H]
  \centering
  \[
    \begin{tabular}{ c | c}
      \textbf{Beatle}&\textbf{Played}\\\hline
      George&Lead guitar\\
      John&Rhythm guitar\\
      Paul&Bass guitar\\
      Ringo&Drums\\
      ~
    \end{tabular}
    \hspace{1in}
    \begin{tabular}{ c |}
      \textbf{Rock-and-roll instrument}\\\hline
      Bass guitar\\
      Drums\\
      Keyboard\\
      Lead guitar\\
      Rhythm guitar
    \end{tabular}
  \]
  
  \caption{Toy example of a database instance corresponding to the schema in
    Figure \ref{fig:beatles_schema}}
  \label{tab:beatles_data}
\end{table}

Observe the following: The actual data -- sets and functions between them -- are available or known beforehand. There might be some missing data, but all functions
usually have well-defined implementations.

We contrast this with \textit{machine learning}, where we might have plenty of
data, but no known implementation of functions that map between data samples.
Table \ref{tab:paired_data} shows an example from the setting of supervised
learning. In this example, samples are paired: for every input we have an
expected output.
Thus, given a trained model and a new sample from a test set -- say ``DataSample4717'' -- we hope our model has learned to generalize well and assign a corresponding output to this input.

\begin{table}[H]
  \centering
  \[
    \begin{tabular}{ c | c}
      \textbf{Input}&\textbf{Corresponding output}\\\hline
      DataSample1&ExpectedOutput1 \\
      DataSample17&ExpectedOutput17 \\
      DataSample30&ExpectedOutput30 \\
      DataSample400&ExpectedOutput400 \\
      $\dots$ \\
    \end{tabular}
    \hspace{.7in}
    \begin{tabular}{ c |}
      \textbf{Output}\\\hline
      ExpectedOutput1 \\
      ExpectedOutput17 \\
      ExpectedOutput30 \\
      ExpectedOutput400 \\
      $\dots$ \\
    \end{tabular}
  \]
  
  \caption{Example of paired datasets in a setting of supervised learning}
  \label{tab:paired_data}
\end{table}

Moreover, we point out that the luxury of having paired data is not always at
our disposal: real life data is mostly \textit{unpaired}. We might have two
datasets that are related \textit{somehow}, but without knowing if any inputs
match any of the outputs.
Consider the example shown in Table \ref{tab:unpaired_data}.

\begin{table}[H]
  \centering
  \[
    \begin{tabular}{ c | c }
      \textbf{Horse image}&\textbf{Horse $\rightarrow$ Zebra}\\\hline
      HorseImg1& ? \\
      HorseImg24& ? \\
      $\vdots$  \\
      HorseImg303& ? \\
      HorseImg2392& ? \\
    \end{tabular}
    \hspace{.7in}
    \begin{tabular}{ c | c}
      \textbf{Zebra image}&\textbf{Zebra $\rightarrow$ Horse}\\\hline
      ZebraImg10& ? \\
      ZebraImg430& ? \\
      $\vdots$ \\
      ZebraImg566& ? \\
      ZebraImg637& ? \\
      ZebraImg700& ? \\
      $\vdots$
    \end{tabular}
  \]
  
  \caption{An example of two unpaired datasets}
  \label{tab:unpaired_data}
\end{table}

The example in Table \ref{tab:unpaired_data} is given by the schema in Figure
\ref{fig:CycleGAN_schema}. They depict the scenario where we have two image
datasets: of images of horses and of images of zebras. Consider those images as
photographs of these animals in nature, in various positions and from various
angles (Figure \ref{fig:cycleGANfrontpage}).
We just have \textit{some} images and do not necessarily
have any pairs of images in our dataset.

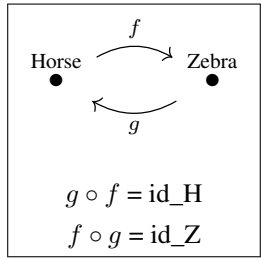
\begin{figure}[H]
    \centering
\boxCD{
\begin{tikzcd}[column sep=small, ampersand replacement=\&]
  \LTO{Horse}\ar[rr, bend left, "f"]\&\&
    \LTO{Zebra}\ar[ll, bend left, "g"]\\
\end{tikzcd}
\\~\\\footnotesize
 $g \circ f$  = \textrm{id}\_{H}  \\
 $f \circ g$ = \textrm{id}\_{Z}
}
\caption{We might hypothesize mappings $f$ and $g$ exist -- without knowing anything about them other than their composition invariants.} 
\label{fig:CycleGAN_schema}
  \end{figure}

Although our dataset does not contain any explicit $\mathrm{Horse-Zebra}$ pairs -- we
still might hypothesize that these pairs exist. In other words, we could think that it should be possible to map back-and-forth between images of Horses and Zebras, just by changing the texture of animal in such an image.
In other words, we posit there exists a specific relationship between
the datasets.
Compared to \textit{databases}, where we have the data and well-defined function
implementations, here all we have is data and a \textit{rough idea} of
which mappings exist, without known implementations.
The issue is that our dataset does not contain explicit pairs usable in the
context of supervised learning.

What is described here is first introduced in a paper by \cite{CycleGAN}. They
introduce a model CycleGAN which is a generalization of Generative Adversarial
Networks \citep{GAN}.
Figure \ref{fig:cycleGANfrontpage} is adapted from their paper and showcases
the main results. 

\begin{figure}[H]
\includegraphics[width=\textwidth]{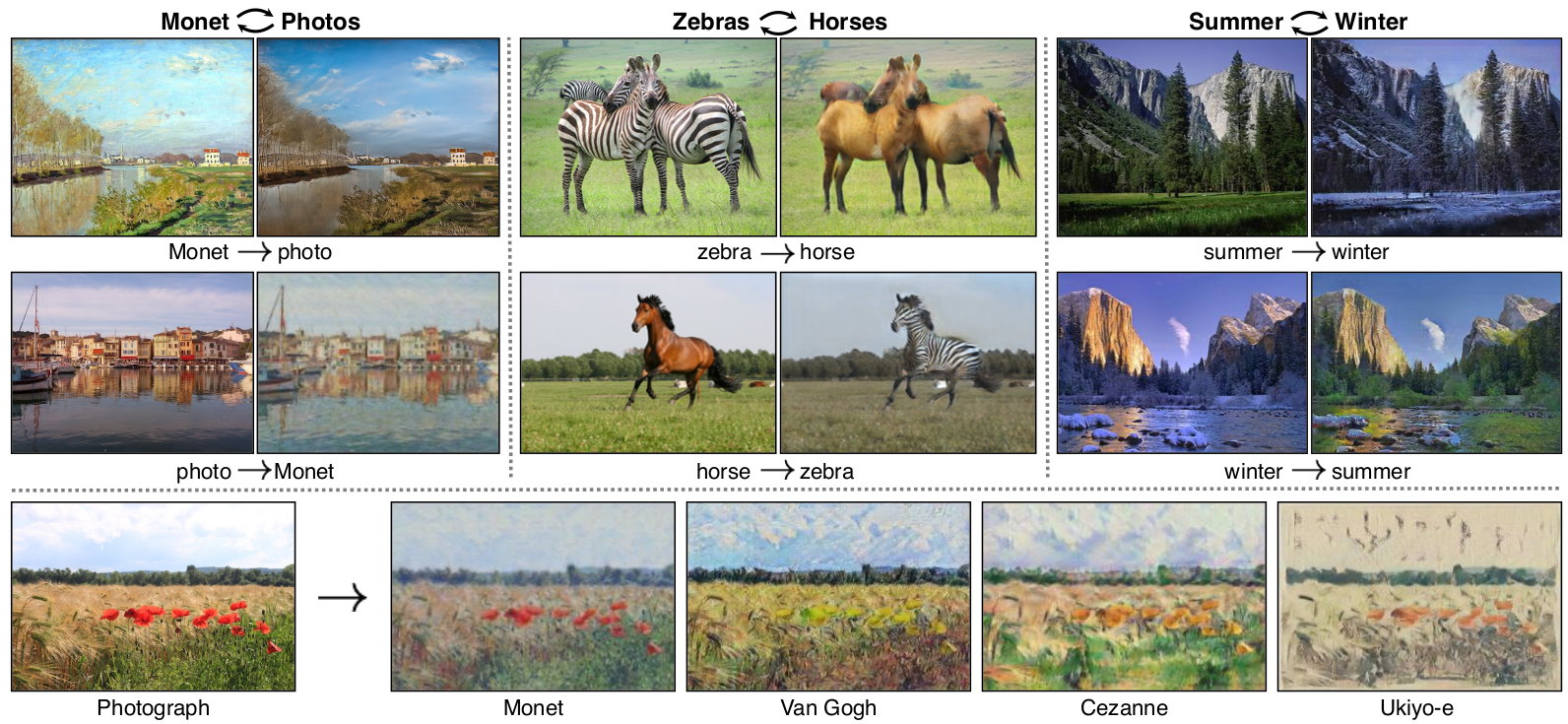}
\centering
\caption{Given any two unordered image collections $X$ and $Y$, CycleGAN learns
  to automatically ``translate'' an image from one into the other and vice
  versa. Figure taken from \cite{CycleGAN}.}
\label{fig:cycleGANfrontpage}
\end{figure}

This shows us that, at least at a first glance, CycleGAN and categorical databases
are related in some abstract way. After developing the necessary categorical tools to reason about CycleGAN, we elaborate on this correspondence in Section \ref{sec:fdm_correspondence}.

\section{Outline of the Main Results}

This thesis aims to bridge two seemingly distinct ideas: category theory and deep
learning.
In doing so we take a collection of abstractions in deep learning and formalize the
notation in categorical terms. 
This allows us to begin to consider a formal theory of gradient-based learning in the
\textit{functor space}.

We package a notion of the interconnection of networks as a free category $\fr$ on
some graph $G$ and specify any equivalences between networks as relations
between morphisms as a quotient category $\qfr$.
Given such a category -- which we call a \textit{schema}, inspired by
\cite{FunctorialDataMigration} -- we specify the architectures of its networks
as a functor $\arch$.
We reason about various other notions found in deep learning, such as datasets,
embeddings, and parameter spaces.
The training process is associated with an indexed family of functors
$\{H_{p_i}: \fr \rightarrow \set \}_{i=1}^T$, where $T$ is the number of training steps and $p_i$ is some choice of a parameter for that architecture at the training step $i$.

Analogous to standard neural networks -- we start with a randomly
initialized $H_p$ and iteratively update it using gradient descent.
Our optimization is guided by \textit{two} objectives.
These objectives arise as a natural generalization of those found in \citep{CycleGAN}.
One of them is the adversarial objective -- the minmax objective found in any
Generative Adversarial Network. The other one is a generalization of the
cycle-consistency loss which we call \textit{path-equivalence loss}.

Although mathematically abstract, this approach yields useful insights.
Our formulation provides maximum generality: (i) it enables learning with
unpaired data as it does not impose any constraints on ordering or pairing of
the sets in a category, and (ii) although specialized to generative models in
the domain of computer vision, the approach is domain-independent and general
enough to hold in any domain of interest, such as sound, text, or video. 

We show that for specific choices of $\qfr$ and the dataset we recover GAN \citep{GAN} and CycleGAN \citep{CycleGAN}.
Furthermore, a novel neural network architecture capable of learning to remove
and insert objects into an image with unpaired data is proposed. We outline its categorical perspective and show it in action by testing it on three different datasets.


\chapter{Categorical Deep Learning}\label{ch:cat_deep_learning}

Modern deep learning optimization algorithms can be framed as a gradient-based search in some function space $Y^X$, where $X$ and $Y$ are sets that have been endowed with extra structure.
Given some sets of data points $D_X \subseteq X$, $D_Y \subseteq Y$, a typical
approach for adding inductive bias relies on exploiting this extra structure
associated to the data points embedded in those sets, or those sets themselves. This structure includes domain-specific features which can be exploited by various methods -- convolutions for images, Fourier transform for audio, and
specialized word embeddings for textual data.

In this chapter we develop the categorical tools to increase inductive bias of a
model without relying on any extra such structure.
We build on top of the work of \citep{BackpropAsFunctor, FunctorialDataMigration}
and, very roughly, define our model as a \textit{collection of networks} and increase its
inductive bias by \textit{enforcing their composition invariants}.


\section{Model Schema}\label{sec:model_schema}

Many deep learning models are complex systems, some comprised of several
neural networks. Each neural network can be identified with domain $X$, codomain $Y$, and
a \textit{differentiable parametrized function} $X \rightarrow Y$.
Given a \textit{collection} of such networks, we use a directed multigraph to
capture their interconnections. We use vertices to represent the domains and codomains,
and edges to represent differentiable parametrized functions. Observe that
an ordinary graph will not suffice, as there can be two \textit{different}
differentiable parametrized functions with the same domain and codomain.

Each directed multigraph $G$ gives rise to a corresponding free category on
that graph $\fr$. Based on this construction, Figure \ref{fig:birdseye} shows
the interconnection pattern for generators of two popular neural network architectures: GAN \citep{GAN} and CycleGAN \citep{CycleGAN}.

\begin{figure}[H]
\centering
\begin{subfigure}{.5\textwidth}
\vskip 0.2in
\centering
\boxCD{
\begin{tikzcd}[column sep=small, ampersand replacement=\&]
  \LTO{Latent space}\ar[rr, bend right, "h"]\&\&
  \LTO{Image} \\
\end{tikzcd}
  \\~\\\footnotesize
  \textit{no equations}
}
\caption{GAN}
\label{fig:GAN_schema}
\end{subfigure}%
\begin{subfigure}{0.5\textwidth}
    \centering
\boxCD{
\begin{tikzcd}[column sep=small, ampersand replacement=\&]
  \LTO{Horse}\ar[rr, bend left, "f"]\&\&
    \LTO{Zebra}\ar[ll, bend left, "g"]\\
\end{tikzcd}
\\~\\\footnotesize
 $g \circ f$  = \textrm{id}\_{H}  \\
 $f \circ g$ = \textrm{id}\_{Z}
}
\caption{CycleGAN} 
\label{fig:CycleGAN_schema}
\end{subfigure}
\caption{Bird's-eye view of two popular neural network models}
\label{fig:birdseye}
\end{figure}
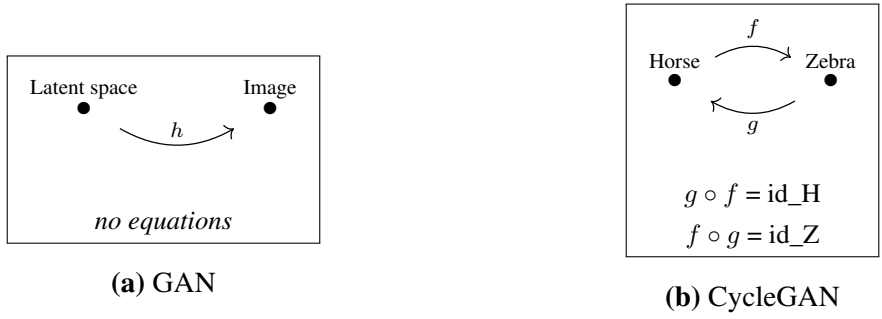

Observe that CycleGAN has some additional properties imposed on it, specified by
equations in Figure \ref{fig:birdseye} (b). These are called CycleGAN cycle-consistency
conditions and can roughly be stated as follows: given domains $A$ and $B$
considered as sets, $a \approx g(f(a)), \forall a \in A$ and $b \approx f(g(b)), \forall b \in B$.

Our approach involves a realization that cycle-consistency conditions
\textit{can be generalized} to path equivalence relations, or, in formal terms -- a congruence relation.
The condition $a \approx g(f(a)), \forall a \in A$ can be reformulated such that
it does not refer to the elements of the set $a \in A$. By \textit{eta-reducing}
the equation we obtain $id_{a} = g \circ f$. Similar reformulation can be done for the other condition: $id_{b} = f \circ g$. 

This allows us to package the newly formed equations as equivalence relations on
the sets $\fr(A, A)$ and $\fr(B, B)$, respectively. This notion can be further packaged
into a quotient category $\qfr$, together with the quotient functor ${\fr
\xrightarrow{Q} \qfr}$.

This formulation -- as a free category on a graph $G$ -- represents the
cornerstone of our approach. These schemas allow us to reason solely about the
interconnections between various concepts, rather than jointly with functions,
networks or other some other sets. All the other constructs in this
thesis are structure-preserving maps between categories whose domain, roughly,
can be traced back to $\fr$.

\section{What Is a Neural Network?}\label{sec:parametricity}

In computer science, the idea of a \textit{neural network} colloquially means a
number of different things. At a most fundamental level, it can be interpreted as
a system of interconnected units called neurons, each of which has a firing
threshold acting as an information filtering system. Drawing inspiration
from biology, this perspective is thoroughly explored in literature. In many
contexts we want to focus on the mathematical properties of a neural network and as such identify it with a function between sets $A \xrightarrow{f}
B$. Those sets are often considered to have extra structure, such as those of Euclidean
spaces or manifolds. Functions are then considered to be maps of a given
differentiability class which preserve such structure.
We also frequently reason about a neural network jointly with its parameter space $P$
as a function of type $f: P \times A \rightarrow B$. For instance, consider a classifier
in the context of supervised learning. A convolutional neural network whose
input is a $32 \times 32$ \texttt{RGB} image and output is real number can be
represented as a function with the following type: $\RR^n \times \RR^{32 \times
  32 \times 3} \rightarrow \RR$, for some $n \in \mathbb{N}$. In this case $\RR^n$ represents the parameter space of this network.

The former ($A \rightarrow B$) and the latter ($P \times A \rightarrow B$)
perspective on neural networks are related.
Namely, consider some function space $B^A$. Any notion of smoothness in such a space is not well defined without any further assumptions on sets $A$ or $B$.
This is the reason deep learning employs a gradient-based search in such a space
via a proxy function $P \times A \rightarrow B$.
This function specifies an entire
\textit{parametrized family} of functions of type $A \rightarrow B$, because partial application of each $p \in P$ yields a function
$f(p, -) : A \rightarrow B$.
This choice of a parametrized family of functions is part of the
\textit{inductive bias} we are building into the training process.
For example, in computer vision it is common to restrict the class of functions
to those that can be modeled by convolutional neural networks.

In more general terms, by currying $f : P \times A \rightarrow B$ we obtain an
important construction in the literature as a parameter-function map
$\mathcal{M} :P \rightarrow B^A$ \citep{ParamFunctionMap}.

The parameter-function map is important as it allows us to map behaviors in the parameter
space to the behavior of functions in the function space.
For example, partial differentiation of $f$ with respect to $p$ allows us
to use gradient information to search the parameter space -- but also the space
of a particular family of functions of type $A \rightarrow B$ specified by $f$.

With this in mind, we recall the model schema.
For each morphism $A \rightarrow B$ in $\fr$ we are interested
in specifying a parametrized function $f : P \times A \rightarrow B$, i.e.~a parametrized
\textit{family of functions} in $\set$.
The construction $f$ describes a neural network architecture, and a choice of a partially applied $p \in P$ to $f$ describes a choice of some parameter value for that specific architecture.

We capture the notion of parametrization with a construction $\para$.
We package both of these notions -- choosing an architecture and choosing
parameters -- into functors. Figure \ref{fig:arch_param}
shows a high-level overview of these constructions -- including a notion that
will be central to this thesis -- $\parae$.

\tikzset{
    labl/.style={anchor=south, rotate=-28, inner sep=.5mm}
}

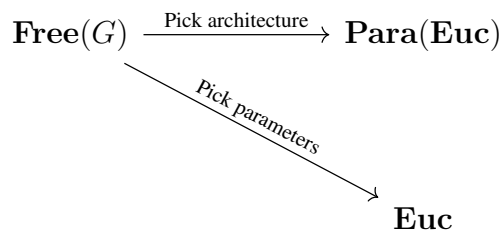
\begin{figure}[H]
\centering
\begin{tikzcd}[column sep = 70pt, row sep = 50pt]
\free(G) \arrow[r, "\textrm{Pick architecture}"] \arrow[rd, "\textrm{Pick
  parameters}" labl] & \parae  \\
& \euc
\end{tikzcd}
\caption{High-level structure of architecture and parameter selection}
\label{fig:arch_param}
\end{figure}


\section{Parametrization and Architecture}

In this section we begin to provide a rigorous categorical framework for reasoning about
neural networks.

\subsection{Parametrization}

We now turn our attention to a construction $\para$, which allows us to
compose parametrized functions of type $f : P \times A \rightarrow B$ in such a
way that it abstracts away the notion of a parameter.
This is a generalization of the construction $\para$ found in \citep{BackpropAsFunctor}.

\begin{definition}[Para]\label{def:para}
  Given any small symmetric monoidal category $(\vv, I, \otimes)$, we can
  construct another small symmetric monoidal category $(\para(\vv), I, \otimes)$ given by the following:
  \begin{itemize}
  \item \textbf{Objects.} $\para(\vv)$ has the same objects as $\vv$;
  \item \textbf{Morphisms.} $\para(\vv)(A, B) := {\{ f : P \otimes A \rightarrow B
    \mid P \in Ob(\vv)\}/\sim}$, where ${f \sim f'}$ if there exists an isomorphism
    $g \in \vv(P, P')$ such that $f' \circ (g \otimes id_A) = f$;
  \item \textbf{Identity.} Identity of any object $A \in Ob(\para(\vv))$ is the
    left unitor $\lambda_A: I \otimes A \rightarrow A$;
  \item \textbf{Composition.} For every three objects $A, B, C$ and morphisms $f
    : P \otimes A \rightarrow B \in \para(\vv)(A, B)$ and $g : Q \otimes B
    \rightarrow C \in \para(\vv)(B, C)$ for some $P, Q \in Ob(\vv)$, we specify a
    morphism $g \circ f \in \para(\vv)(A, C)$ as follows:
\begin{align*}
g \circ f &:  (P \otimes Q) \otimes A \rightarrow C \\
g \circ f &= \lambda ((p, q), a) \rightarrow g(q, f(p, a)) 
\end{align*}

\end{itemize}

Monoidal structure is inherited from $\vv$.

\end{definition}

\begin{proof}
To prove $\parav$ is indeed a category, we need to show associativity and unitality of
composition strictly holds.
Observe that composition in $\parav$ is defined in terms of the monoidal product
in $\vv$. Consider the two different ways of composing following morphisms in $\parav$:
\begin{align*}
  f&: P \otimes A \rightarrow B \in \parav(A, B),\\
  g&: Q \otimes B \rightarrow C \in \parav(B, C),\\
  h&: R \otimes C \rightarrow D \in \parav(C, D).
\end{align*}
Depending on the order we compose them, we end up with one of the two following morphisms:
\begin{align*}
  h \circ (g \circ f) &: ((P \otimes Q) \otimes R) \otimes A \rightarrow D,\\
  (h \circ g) \circ f &: (P \otimes (Q \otimes R)) \otimes A \rightarrow D.
\end{align*}

\sloppy
Even though it might seem strictness of the associativity of composition in
$\parav$ depends on the strictness of the monoidal product in $\vv$, we note
that morphisms in $\parav$ are actually equivalence classes.
Namely, because every monoidal category comes equipped with an associator $\alpha_{P,
  Q, R}: {(P \otimes Q) \otimes R \cong P \otimes (Q \otimes R)}$, both $h \circ
(g \circ f)$ and $(h \circ g) \circ f$ fall into the same equivalence class,
making composition in $\parav$ strictly associative.

Similar argument can be made for the unitality condition, thus showing $\parav$
is a category.
\end{proof}

We will call morphisms in $\para(\vv)$ \textit{parametrized morphisms},
\textit{neural networks}, or \textit{neural network architectures} depending on
the context.\footnote{
 Note that $\para$ is not a functor between categories $\vv$ and $\parav$, but
 rather an endofunctor on $\Cat{SMC_{str}}$, the category of of all small
 symmetric monoidal categories. We outline this for completness but do not prove
 or explore this direction further.}
Abusing our notation slightly, we will refer to a morphism in $\parav$ by some
elements from the corresponding equivalence class.

The composition of morphisms in $\parav$ is defined in such a
way that it explicitly keeps track of parameters. Namely, when we sequentially compose two morphisms $A \xrightarrow{f} B$ and $B \xrightarrow{g} C$ in $\parav$, we are
actually composing morphisms $P \otimes A \rightarrow B$ and
$Q \otimes B \rightarrow C$ in $\vv$ such that the composition $(P \otimes Q) \otimes A
\rightarrow C$ keeps track of parameters separately.\footnote{
 Even though objects in $\parav$ generally do not have elements, in Definition
 \ref{def:para} composition is stated in terms of elements to supply intuition. We have also bracketed the monoidal product even thought $\otimes$ is strict for a similar reason -- to show we think of $P \otimes Q$ as the parameter of the composite
 $g \circ f$. We invite the reader to \citep{BackpropAsFunctor}, which contains a
 particularly clean interpretation of $\parav$ in terms of string
 diagrams.}


%
%
%

This construction $\para(\vv)$ generalizes the category $\para$ as originally defined in
\citet{BackpropAsFunctor}. Namely, by setting $\vv := \euc$, we recover the
notion $\para$ as it is described in the aforementioned paper.
As $\para(\euc)$ will make continued appearance in this thesis, we describe some of its properties here.

$\parae$ is a strict symmetric monoidal category whose objects are Euclidean
spaces and morphisms $\RR^n \rightarrow \RR^m$ are equivalence classes of
differentiable function of type $\RR^p \times \RR^n \rightarrow \RR^m$, for some
$p \in \mathbb{N}$.
We refer to $\RR^p$ as the parameter space.

Monoidal product in $\parae$ is the Cartesian product inherited from $\euc$. As
maps in $\euc$ are differentiable, so are maps in $\parae$ thus enabling us to
consider gradient-based optimization in a more abstract setting.

\subsubsection{Parameter selection in a monoidal category}

Previously, in the context of functions $f : P \times A \rightarrow B$ between
sets, we have considered the partial application $f(p, -)$. Now, we are
interested in doing the same in $\parav$, given some small symmetric monoidal
category $\vv$.

There are two issues with this statement.
\begin{enumerate}
\item Internal structure of objects in $\vv$ is unknown -- they might not be sets
  with elements
\item It assumes a specific notion of completeness: for every $f : P \otimes A \rightarrow B$ in $\parav$ we assume $f(p, -) \in \vv(A, B), \forall p \in P$.
\end{enumerate}

We solve both of the issues by noting that picking a parameter $p \in P$ in a monoidal
category without any assumption on the internals of its objects amounts to picking a morphism $I \xrightarrow{p} P$.
Then the specific notion of completeness \emph{is already given to us} by the
monoidal product on $\vv$. Indeed, $f(p, -)$ amounts to the composition $f
\circ (p \otimes id_A) \circ \lambda_A^{-1} : \vv(A, B)$, where $\lambda^{-1}$
is the inverse of left unitor of the monoidal category.

However, most of our consideration in this thesis will be where $\vv := \euc$. In
these cases the notation $p \in P$ is well-justified, as $\euc$ comes equipped
with a forgetful functor $\euc \xrightarrow{U} \set$.

\subsection{Model Architecture}\label{sec:model_arch}

We now formally specify \textit{model architecture} as a functor.
We chose $\fr$ as the domain of the functor, rather than $\qfr$, for reasons
that will be explained in Remark \ref{rem:relations}. As such, observe that the
action on morphisms is defined on the generators in $\fr$.

\begin{definition}\label{def:functor_p}
  Architecture of a model is a functor $\arch: \fr \rightarrow \para(\euc)$.
\begin{itemize}
\item For each $A \in Ob(\fr)$, it specifies an Euclidean space $\RR^a$;
\item For each generating morphism $A \xrightarrow{f} B$ in $\fr$, it specifies
  a morphism \\ $\RR^a \xrightarrow{\arch(f)} \RR^b$ which is a differentiable parametrized function of type $\RR^n \times \RR^a \rightarrow \RR^b$.
\end{itemize}
Given a non-trivial composite morphism $f = f_n \circ f_{n - 1} \circ \dots \circ f_1$ in $\cc$,
the image of $f$ under $\arch$ is the composite of the image of each constituent: 
$\arch(f) = \arch(f_n) \circ \arch(f_{n - 1}) \circ \dots \circ \arch(f_1)$.
$\arch$ maps identities to the projection $\pi_2: I \times A
\rightarrow A$.
\end{definition}

\begin{remark}\label{rem:relations}
  The reason the domain of $\arch$ is $\fr$, rather than $\qfr$ can be illustrated with
  the following example. Consider two morphisms $id_A: A \rightarrow A$ and $g
  \circ f: A \rightarrow A$ in some $\qfr$. Suppose $id_A = g \circ f$. The
  value image of architecture at $id_A$ is already given: $\arch(id_A) := I \otimes A
  \rightarrow A$, but for $g \circ f$ it is defined as $\arch(g) \circ
  \arch(f)$. Hence, there exists a choice $\arch(g)$ and $\arch(f)$ such
  that $\arch(id_A) \neq \arch(g) \circ \arch(f)$, rendering this structure
  defined on $\qfr$ not a functor.
  However, this is not an issue; we will show it will be possible to \textit{learn} those relations.
\end{remark}

The choice of architecture $\fr \xrightarrow{\arch} \parae$ goes hand in hand
with the choice of an \textit{embedding}.

\begin{proposition}
An embedding is a functor $\discr{\fr} \xrightarrow{E} \set$ which agrees with
$\arch$ on objects.
\end{proposition}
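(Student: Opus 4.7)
The plan is to read this statement as a well-definedness claim: the data one specifies for an embedding (a set at every object of $\fr$ that matches the Euclidean space picked by $\arch$) does in fact assemble into a functor with the stated signature. My first step is to recall that the discretization $\discr{\fr}$ has exactly the same objects as $\fr$, but its only morphisms are the identities $\id_A$ for $A \in Ob(\fr)$. A functor out of $\discr{\fr}$ is therefore fully determined by its object-level assignment together with the forced rule $E(\id_A) := \id_{E(A)}$.

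Next I would verify the two functor axioms explicitly. Preservation of identities is true by construction. Preservation of composition is vacuous: the only composable pairs of morphisms in $\discr{\fr}$ are of the form $\id_A \circ \id_A$, and $E(\id_A) \circ E(\id_A) = \id_{E(A)} \circ \id_{E(A)} = \id_{E(A)} = E(\id_A \circ \id_A)$. Hence $E$ is a functor, and the entire functorial content is concentrated in the object assignment.

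Finally I would make precise what "agrees with $\arch$ on objects" means so that the statement typechecks across the two different codomains $\set$ and $\parae$. Since $\arch(A) = \RR^a$ lies in $Ob(\parae) = Ob(\euc)$, and $\euc$ was equipped with the forgetful functor $U : \euc \rightarrow \set$ used in the parameter-selection subsection, the compatibility condition reads $E(A) = U(\arch(A))$, or more loosely $E(A) \subseteq U(\arch(A))$ when $E(A)$ is viewed as a dataset sitting inside the ambient Euclidean space. The only conceptual subtlety — and the main thing worth pointing out rather than a technical obstacle — is why the domain is taken to be $\discr{\fr}$ and not $\fr$: embeddings carry no interpretation of the generating morphisms, since those are already the job of $\arch$. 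Once this is acknowledged, the proof is pure bookkeeping and no further machinery is required.
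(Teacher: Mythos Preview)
Your reading of the statement as a well-definedness claim is reasonable, and the verification you give is correct. However, the paper itself treats this ``proposition'' essentially as a definition: it offers no proof of the functor axioms at all and only supplies a two-line remark explaining why it is coherent to say that $E$ has codomain $\set$ while $\arch$ has codomain $\parae$. That remark is exactly the content of your third paragraph, phrased slightly differently: (i) $\parae$ and $\euc$ have the same objects, and (ii) objects in $\euc$ are sets with extra structure, so ``agrees with $\arch$ on objects'' makes sense across the two codomains. Your use of the forgetful functor $U$ to express this is cleaner and matches how the paper later defines network instances as $U \circ \model_p$.

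Your first two paragraphs --- the explicit check that a functor out of a discrete category is determined by its object assignment and that the functor axioms are trivially satisfied --- are additional rigor the paper simply omits. That is fine and arguably an improvement.

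One small point: your alternative reading ``or more loosely $E(A) \subseteq U(\arch(A))$'' is not what the paper intends here. The embedding $E$ is meant to \emph{equal} $\arch$ on objects (after forgetting structure); it is the \emph{dataset} functor $D_E$, introduced later, that is a subfunctor of $E$. So you should drop that alternative and keep only the equality $E(A) = U(\arch(A))$.
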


Observe that the codomain of $E$ is $\set$, rather than $\parae$. This can be
shown in two steps: (i) $\parae$ and $\euc$ have the same objects, and (ii)
objects in $\euc$ are just sets with extra structure.

Embedding $E$ and $\arch$ come up in two different scenarios. Sometimes we start
out with a choice of architecture which then induces the embedding. In other cases, the
embedding is given to us beforehand and it restricts the possible choice of
architectures.
The embedding construction will prove to be important later in this thesis.

Having defined $\fr \xrightarrow{\arch} \parae$, we shift our attention to the
notion of parameter specification. For a given a differentiable parametrized function
$\arch(f): \RR^n \times \RR^a \rightarrow \RR^b$ the training process involves
repeatedly updating the chosen $p \in \RR^n$. We might suspect this process of
choosing parameters might be made into a functor $\parae \rightarrow \euc$, mapping each $\arch(f): \RR^n \times \RR^a \rightarrow \RR^b$ into $\arch(f)(p, -) : \RR^a \rightarrow \RR^b$. 
However, it can be seen that this is not the case by considering
fully-connected neural networks; we want to specify parameters for an $N$-layer
neural network by specifying parameters for each of its layers. In categorical
terms, this means that we want to specify the action of this functor on generators in $\parae$.
Since $\parae$ might also have arbitrary relations between morphisms, we cannot
be sure this recursive approach will satisfy any such relations. This, in turn, stops us from considering this construction as a functor.

Moreover, even if this construction could be a functor, we show that it might not
be the construction we are interested in.
Observe that $\arch$ is not necessarily faithful. Suppose two different arrows $A
\doublerightarrow{f}{g} B$ in $\fr$ are
mapped to the same neural network architecture $\arch(f) = \arch(g): \RR^n \times \RR^a
\rightarrow \RR^b : \parae(\arch(A), \arch(B))$. Even though images of these
arrows are the same, it is beneficial and necessary to keep in mind that those
two are separate neural networks, each of which could have a different parameter
assigned to it during training. Any such parameter specification functor whose
domain is $\parae$ would have to specify \textit{one} parameter value for such
a morphism. This, in turn, means that this construction would not allow us to have two distinct parameters for what we consider to be two distinct networks.

Before coming back to this issue, we take a slight detour and consider various notions of \textit{parameter spaces}.

\section{Architecture Parameter Space}\label{sec:param_spec}

Each network architecture $f : \RR^n \times \RR^a \rightarrow \RR^b$ comes equipped
with its parameter space $\RR^n$.
Just as $\fr \xrightarrow{\arch} \parae$ is a categorical generalization of
architecture, we now show there exists a categorical generalization of a
parameter space. In this case -- it is the parameter space of a functor.
Before we move on to the main definition, we package the notion of parameter
space of a function $f : \RR^n \times \RR^a \rightarrow \RR^b$ into a simple function
$\mathfrak{p}(f) = \RR^n$.

\sloppy
\begin{definition}[Functor parameter space]\label{def:parameter_space}
Let $\gen{\fr}$ the set of generators in $\fr$. The total parameter map ${\mathcal{P}: Ob(\parae^{\fr}) \rightarrow Ob(\euc)}$ assigns to each functor $\arch$
the product of the parameter spaces of all its generating morphisms:
\[
\mathcal{P}(\arch) = \displaystyle \prod_{f \in \gen{\fr}}{\mathfrak{p}(\arch(f))}
\]
\end{definition}
Essentially, just as $\mathfrak{p}$ returns the parameter space of a function,
$\mathcal{P}$ does the same for a \textit{functor}.


We are now in a position to talk about parameter specification. Recall the
non-categorical setting: given some network architecture $f: P \times A
\rightarrow B$ and a choice of $p \in \mathfrak{p}(f)$ we can partially apply
the parameter $p$ to the network to get $f(p, -) : A \rightarrow B$.
This admits a straightforward generalization to the categorical setting.

\begin{definition}[Parameter specification]\label{def:param_spec}
Parameter specification $\pspec$ is a dependently typed function with the
following signature:
\[
\pspec : (\arch : Ob(\parae^{\fr})) \times \mathcal{P}(\arch) \rightarrow Ob(\euc^{\fr})
\]
Given an architecture $\arch$ and a parameter choice $(p_f)_{f \in \gen{\fr}} \in \mathcal{P}(\arch)$ for that architecture, it defines a choice of
a functor in $\euc^{\fr}$. This functor acts on objects the same as $\arch$. On
morphisms, it partially applies every $p_f$ to the corresponding morphism
$\arch(f) : \RR^n \times \RR^a \rightarrow \RR^b$, thus yielding $f(p_f, -) : \RR^a \rightarrow \RR^b$ in $\euc$.
\end{definition}

Elements of $\euc^{\fr}$ will play a central role later on in the thesis.
These elements are functors which we will call \textit{Models}. Given some
architecture $\arch$ and a parameter $p \in \ps(\arch)$, a model $\fr \xrightarrow{\model_p} \euc$
generalizes the standard notion of a model in machine learning -- it can be used
for inference and evaluated.

Analogous to database instances in \cite{FunctorialDataMigration}, we call a
\textit{network instance} $H_p$ the composition of some $\model_p$ with the
forgetful functor $\euc \xrightarrow{U} \set$. That is to say, a network instance is a functor $\fr \xrightarrow{H_p} \set := U \circ \model_p$.

\begin{corollary}\label{cor:action_on_objects}
Given an architecture $\fr \xrightarrow{\arch} \parae$, for each $p \in
\ps(\arch)$ all network instances $\fr \xrightarrow{H_p} \set$ act the same on objects.
\end{corollary}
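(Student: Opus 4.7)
The plan is to unwind the definitions of $H_p$ and $\model_p$ and observe that neither depends on $p$ when restricted to the action on objects. First, I would recall that by definition a network instance satisfies $H_p = U \circ \model_p$, where $U: \euc \rightarrow \set$ is the forgetful functor mentioned at the end of Section \ref{sec:parametricity}, and $\model_p = \pspec(\arch, p)$ is the functor produced by parameter specification from Definition \ref{def:param_spec}. The critical observation I would then invoke is that, per Definition \ref{def:param_spec}, the functor $\model_p$ is explicitly required to act on objects in the same way as $\arch$, namely $\model_p(A) = \arch(A) \in Ob(\euc)$ for every $A \in Ob(\fr)$; the parameter tuple $(p_f)_{f \in \gen{\fr}}$ enters only through partial application to the generating morphisms, i.e., on the morphism part of the functor.

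With that in hand, I would chase the definitions on objects: for any $A \in Ob(\fr)$ we have $H_p(A) = U(\model_p(A)) = U(\arch(A))$, an expression which makes no reference to $p$. Hence for any two parameter choices $p, p' \in \ps(\arch)$ and any object $A$, it follows that $H_p(A) = U(\arch(A)) = H_{p'}(A)$, which is exactly the claim. The only point that could conceivably require care is the verification that parameter specification genuinely leaves the object assignment of $\arch$ untouched, but this is built directly into Definition \ref{def:param_spec}, so I anticipate no real obstacle — the corollary is essentially the observation that $\pspec$ affects only the morphism part of a functor, not its object part, and post-composition with $U$ cannot reintroduce parameter dependence.
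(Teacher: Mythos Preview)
Your proposal is correct and follows essentially the same approach as the paper: reduce to showing that $\model_p$ acts on objects independently of $p$ (since $H_p = U \circ \model_p$), and then invoke Definition~\ref{def:param_spec}, which stipulates that $\model_p$ agrees with $\arch$ on objects. The paper's proof is just a terser version of what you wrote.
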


\begin{proof}
As $H_p : U \circ Model_P$, it is sufficient to prove that for all $p \in
\ps(\arch)$ all models act the same on objects. This follows from Definition
\ref{def:param_spec} which tells us that action of $\model_p$ on objects is
independent of $p$.
\end{proof}

This means that the only thing different between any two network instances
during training is the choice of a parameter they partially apply to morphisms in the
image of $\parae$. Objects -- the domain of functions resulting from partial
applications -- stay the same.
This is evident in standard machine learning models as well, where we obviously do not
change their type of input during architecture parameter updates.

We shed some more light on these constructions using Figure
\ref{fig:arch_model_h}.

\begin{figure}[H]
\centering
\begin{tikzcd}[column sep=40pt, row sep=50pt]
\free(G) \arrow[r, "\arch"] \arrow[rd, "\model_p"] \arrow[ddr,
"\Fun{H}_p"] & \parae \\
& \euc \arrow[d, "U"] \\
& \Cat{Set} 
\end{tikzcd}
\caption{$\fr$ is the domain of three types of functors of interest: $\arch$, $\model_p$
  and $H_p$.}
\label{fig:arch_model_h}
\end{figure}
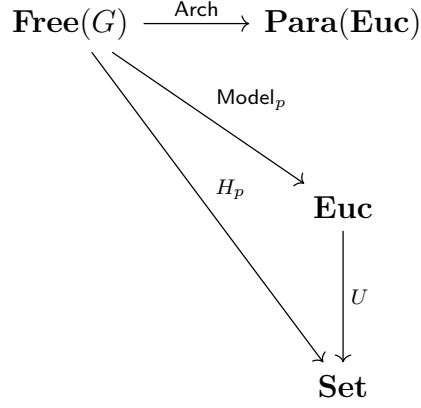

\subsection{Parameter-functor Map.}

In this section we show the parameter-function map detailed in Section
\ref{sec:parametricity} admits a natural generalization to a categorical
setting. Recall that given some architecture -- a differentiable function between sets
$f: P \times A \rightarrow B$ -- we can obtain the \textit{parameter-function map} $P \rightarrow B^A$.

In a categorical setting, given an architecture $\fr \xrightarrow{\arch} \parae$ we can
obtain the \textit{parameter-functor map} by partially applying $\arch$ to $\pspec$.

\begin{definition}[Parameter-functor map]\label{def:param_functor_map}
  Let $\fr \xrightarrow{\arch} \parae$ be the architecture.
  Then the parameter-functor map $\mathcal{M}_{\euc}$ is the partial application
  $\mathcal{M}_{\euc} := \pspec(\arch, -)$ which has the type:
  
  \[
    \mathcal{M}_{\euc} : \mathcal{P}(\arch) \rightarrow Ob(\euc^{\fr}).
  \]
  
  More precisely, its values in $Ob(\euc^{\fr})$ are models $\fr
  \xrightarrow{\model_i} \euc$.
  The map $\mathcal{M}_{\euc}$ map naturally extends to $\mathcal{M}_{\set}:
  \ps(\arch) \rightarrow \set^{\fr}$ via the forgetful functor $\euc
  \rightarrow \set$. When considering $\mathcal{M}_{\set}$ we will simply write $\mathcal{M}$.
\end{definition}

\begin{example}\label{ex:param_function_special_case}
  We show how the parameter-function map arises as a special case of the parameter-functor map.
  Let $\fr =$ \begin{tikzcd}[column sep=small, ampersand replacement=\&]
  \LTO{A}\ar[r, "f"]\& \LTO{B} \\
\end{tikzcd}. Consider the architecture $\begin{tikzcd}[column sep=10, row sep=14, ampersand replacement=\&] \LTO{A}\ar[r, "f"]\& \LTO{B} \\
\end{tikzcd} \xrightarrow{\arch} \parae$ such that $\arch(f) :
\RR^n \times \RR^a \rightarrow \RR^b$. Then $\mathcal{P}(\arch) = \RR^n$.
The parameter-functor map partially applied to $\arch$ has the following type $\pspec(\arch, -) :
\RR^n \rightarrow Ob(\euc^{\begin{tikzcd}[column sep=10, row sep=3, ampersand replacement=\&] \LTO{A}\ar[r, "f"]\& \LTO{B} \\
\end{tikzcd}})$. For each $p \in \RR^p$ it specifies a choice of a functor
which can be identified with a function $\arch(f)(p, -) : \RR^a \rightarrow
\RR^b$.
This means $\pspec(\arch, -)$ can be identified with a function $\RR^n
\rightarrow \RR^{b^{\RR^a}}$, thus reducing to the parameter-function map.
\end{example}

Parameter-functor map $\mathcal{M} : \ps(\arch) \rightarrow Ob(\set^{\fr})$ is an important construction because of the following reason.
It allows us to consider its image in $\set^{\fr}$ as a \textit{space} which
inherits all the properties of $\euc$.
We explore this statement in detail in Section \ref{sec:functor_space}.

\section{Data}

We have described constructions which allow us to pick an architecture for a
schema and consider its different models $\model_p$, each of them identified with a
choice of a parameter $p \in \ps(\arch)$.
In order to understand how the optimization process is steered in
updating the parameter choice for an architecture, we need to understand
a vital component of any deep learning system -- datasets themselves. 

Understanding how datasets fit into the broader picture necessitates that we also understand their relationship to the space they are embedded in.
Recall the \textit{embedding} functor and the notation $\ccv$ for the discretizaton of some category $\cc$.

\begin{definition}\label{def:datasetf}
Let $\discr{\fr} \xrightarrow{E} \set$ be the embedding. A
\textbf{dataset} is a subfunctor $\Fun{D}_E: \vfr \rightarrow \set$ of $E$.
$D_E$ maps each object $A \in Ob(\fr)$ to a dataset $\Fun{D}_E(A) :=
\{a_i\}_{i=1}^N \subseteq \Fun{E}(A)$.
\end{definition}

Note that we refer to this functor in the singular, although it assigns a
dataset to \textit{each} object in $\fr$. We also highlight that the domain of
$\Fun{D}_E$ is $\vfr$, rather than $\fr$. We generally cannot provide an action on
morphisms because datasets might be incomplete.
Going back to the example with Horses and Zebras -- a dataset functor on $\fr$ in
Figure \ref{fig:birdseye} (b) maps $\mathrm{Horse}$ to the set of obtained horse images
and $\mathrm{Zebra}$ to the set of obtained zebra images.

The subobject relation $D_E \subseteq E$ in Proposition \ref{def:datasetf} reflects an important property of data;
we cannot obtain some data without it being in some shape or form, embedded in
some larger space. Any obtained data thus implicitly fixes an embedding.

Observe that when we have a dataset in standard machine learning, we have a
dataset \textit{of something}. We can have a dataset of historical weather data,
a dataset of housing prices in New York or a dataset of cat images.
What ties all these concepts together is that each element $a_i$ of some
dataset $\{a_i\}_{i=1}^N$ is an instance of a more general concept. As a trivial
example, every image in the dataset of horse images is a
\textit{horse}. The word \textit{horse} refers to a more general concept and as
such could be generalized from some of its instances which we \textit{do not
  possess}. But all the horse images we possess are indeed an example of a
horse.
By considering everything to be embedded in some space $E(A)$ we capture this
statement with the relation $\{a_i\}_{i=1}^N \subseteq \cpt(A) \subseteq
E(A)$.
Here $\cpt(A)$ is the set of all instances of some notion $A$ which are
embedded in $E(A)$. In the running example this corresponds to all images of
horses in a given space, such as the space of all $64 \times 64$ \texttt{RGB}
images.
Obviously, the precise specification of $\cpt(A)$ is unknown -- as we cannot
enumerate or specify the set of \textit{all} horse images.

We use such calligraphy to denote this is an abstract concept.
Despite the fact that its precise specification is unknown, we can still reason about its relationship to other structures.
Furthermore, as it is the case with any abstract notion, there might be some
edge cases or it might turn out that this concept is ambiguously defined or
even inconsistent. Moreover, it might be possible to identify a dataset with
multiple concepts; is a dataset of male human faces associated with the concept of
male faces or with with all faces in general?
We ignore these concerns and assume each dataset is a dataset of some
well-defined, consistent and unambiguous concept. This does not change the
validity of the rest of the formalism in any way as there exist plenty of
datasets satisfying such a constraint.

Armed with intuition, we show this admits a generalization to the
categorical setting.
Just as $\{a_i\}_{i=1}^N \subseteq \cpt(A) \subseteq
E(A)$ are all subsets of $E(A)$ we might hypothesize $D_E \subseteq \cpt 
\subseteq E$ are all subfunctors of $E$. This is quite close to being true. It
would mean that the domain of $\cpt$ is the discrete category $\vfr$. However,
just as we assign a set of all concept instances to \textit{objects} in $\fr$,
we also assign a function between these sets to \textit{morphisms} in $\fr$. Unlike with
datasets, this can be done because, by definition, these sets are not incomplete.

We make this precise as follows. Recall the inclusion functor ${\vfr \xhookrightarrow{I} \qfr}$.

\begin{definition}
  Given a schema $\qfr$ and a dataset $\vfr \xrightarrow{D_E} \set$, a
  \textbf{concept} associated with the dataset $D_E$ embedded in $E$ is a
  functor $\cpt: \qfr \rightarrow \set$ such that $D_E \subseteq \cpt \circ I
  \subseteq E$. We say $\cpt$ picks out sets of concept instances and functions
  between those sets. 
\end{definition}

Another way to understand a concept $\qfr \xrightarrow{\cpt} \set$ is that it is
required that a human observer can tell, for each $A \in Ob(\fr)$
and some $a \in \emb(A)$ whether $a \in \cpt(A)$. Similarly for morphisms, a
human observer should be able to tell if some function $\cpt(A) \xrightarrow{f}
\cpt(B)$ is an image of some morphism in $\qfr$ under $\cpt$.

For instance, consider the GAN schema in Figure \ref{fig:birdseye} (a) where
$\cpt(\mathrm{Image})$ is a set of all images of human faces embedded in some
space such as $\RR^{64 \times 64 \times 3}$. For each image in this space, a human observer should be able to tell if that image contains a face or not.
We cannot enumerate such a set $\cpt(\mathrm{Image})$ or write it down
explicitly, but we can easily tell if an image contains a given concept.
Likewise, for a morphism in the CycleGAN schema (Figure \ref{fig:birdseye} (b)),
we cannot explicitly write down a function which transforms a horse into a
zebra, but we can tell if some function did a good job or not by testing it on
different inputs.

The most important thing related to this concept is that this represents the
goal of our optimization process. Given a dataset $\vfr \xrightarrow{D_E} \set$,
want to extend it into a functor $\qfr \xrightarrow{\cpt} \set$, and actually
\textit{learn} its implementation.

\section{Optimization}

\sloppy
We now describe how data guides the search process. We identify
the goal of this search with the concept functor $\qfr
\xrightarrow{\cpt} \set$. This means that given a schema $\qfr$ and data
$\vfr \xrightarrow{D_E} \set$ we want to train some architecture $\arch$ and
find a functor ${\qfr \xrightarrow{H'} \set}$ that can be identified with $\cpt$. Of
course, unlike in the case of the concept $\cpt$, the implementation of $H'$ is
something that will be known to us.

We now define the notion of a \textit{task}.

\begin{definition}\label{def:task}
Let $G$ be a directed multigraph and $\sim$ a congruence relation on $\fr$.
A task is a triple $(G, \sim, \vfr \xrightarrow{D_E} \set)$.
\end{definition}

In other words, a graph $G$ and $\sim$ specify a schema $\qfr$ and a functor $D_E$
specifies a dataset for that schema.
Each dataset is a dataset \textit{of something} and thus can be associated with a
functor $\qfr \xrightarrow{\cpt} \set$.
Moreover, recall that a dataset fixes an embedding $E$ too, as $D_E \subseteq E$.
This in turn also narrows our choice of architecture $\fr \xrightarrow{\arch}
\parae$, as it has to agree with the embedding on objects. This situation fully
reflects what happens in standard machine learning practice -- a neural network
$P \times A \rightarrow B$ has to be defined in such a way that its domain $A$
and codomain $B$ embed the datasets of all of its inputs and outputs, respectively.

Even though for the same schema $\qfr$ we might want to consider different
datasets, we will always assume a chosen dataset corresponds to a single training goal $\cpt$.

\subsection{Optimization Objectives}

As it is the general theme of this thesis -- we generalize an
already established construction to a categorical setting in a natural way, free
of ad-hoc choices. This time we focus on the training procedure as described in \cite{CycleGAN}.

Suppose we have a task $(G, \sim, \vfr \xrightarrow{D_E} \set)$.
After choosing an architecture $\fr \xrightarrow{\arch}
\parae$ consistent with the embedding $E$ and, hopefully, with the right inductive bias, we start with a randomly chosen parameter
$\theta_0 \in \ps(\arch)$. Via the parameter-functor map (Definition
\ref{def:param_functor_map}), this amounts to the choice of a specific $\fr \xrightarrow{\model_{\theta_0}} \euc$.
Using the loss function defined further down in this section, we partially
differentiate each $f : \RR^n \times \RR^a \rightarrow \RR^b \in \gen{\fr}$ with
respect to the corresponding $p_f$ . We then obtain a new parameter value for
that function using some update rule, such as Adam \citep{Adam}.
The product of these parameters for each of the generators $(p_f)_{f \in
  \gen{\fr}}$ (Definition \ref{def:parameter_space}) defines a new parameter
$\theta_1 \in \ps(\arch)$ for the model $\model_{\theta_1}$.
This procedure allows us to iteratively update a given $\model_{\theta_i}$ and
as such fixes a linear order $\{\theta_0, \theta_1, \dots, \theta_T\}$ on some subset of $\ps(\arch)$.

\sloppy
The optimization objective for a model $\fr \xrightarrow{\model_{\theta}} \euc$ and a task $\task$ is twofold. The total loss will be stated as
a sum of the \textit{adversarial loss} and a \textit{path-equivalence loss}. We
now describe both of these losses.

Adversarial loss is tightly linked to an important, but orthogonal concept to
this thesis -- Generative Adversarial Networks (GANs). As we slowly transition to standard machine learning terminology, we note that some of the notation here will be untyped due to the lack of the proper categorical understanding of these concepts.\footnote{
  In this thesis this adversarial component is used in the optimization
  procedure, but to the best of our knowledge, it has not been properly framed in
  a categorical setting yet and is still an open problem. It seems to require
  nontrivial reformulations of existing constructions \citep{BackpropAsFunctor}
  and at least a partial integration of Open Games \citep{CompositionalGameTheory} into the framework of gradient-based optimization. In this thesis we do not concern
  ourselves with these matters and they present no practical issues for training
  these networks.}

We now roughly describe how discriminators fit into the story so far, assuming a
fixed architecture.
We assign a discriminator to each object $A \in Ob(\fr)$ using the following function:
\begin{equation}
\mathbf{D} : (A : Ob(\fr)) \rightarrow \parae(\arch(A), \RR)
\label{eq:discr_function}
\end{equation}

This function assigns to each object $A \in Ob(\fr)$ a morphisms in $\parae$ such that its
domain is that given by $\arch(A)$. This will allow us to compose compatible
generators and discriminators.
For instance, consider $\arch(A) = \RR^a$. Discriminator $\bd(A)$ is then a
function of type $\RR^q \times \RR^a \rightarrow \RR : \parae(\RR^a, \RR)$,
where $\RR^q$ is discriminator's parameter space.
As a slight abuse of notation -- and to be more in line with machine learning
notation -- we will call $\bd_A$ discriminator of the object $A$ with some partially applied parameter value $\bd(A)(p, -)$.

In the context of GANs, when we refer to a generator we refer to the image of a generating morphism in $\fr$ under $\arch$.
Similarly as with discriminators, a generator corresponding
to a morphism $\RR^a \xrightarrow{f} \RR^b$ in $\parae$ with some partially applied
parameter value will be denoted using $\bg_f$.

The GAN minimax objective $\mathcal{L}_{GAN}^B$ for a generator $\bg_f$ and a
discriminator $\bd_B$ is stated in Eq. \eqref{eq:wgan-gp}. In
this formulation we use Wasserstein distance \citep{WGAN}.

\begin{equation}
  \begin{split}
    \mathcal{L}_{GAN}^B(\bg_f, \bd_B) &:= \mathop{\mathbb{E}}_{b \sim D_E(B)} \left[ \bd_B(b) \right] \\
    &- \mathop{\mathbb{E}}_{a \sim D_E(A)} \left[ \bd_B(\bg_f(a))  \right]
  \end{split}
\label{eq:wgan-gp}
\end{equation}

The generator is trained to minimize the loss in the 
Eq. \eqref{eq:wgan-gp}, while the discriminator is trained to maximize it.

The second component of the total loss is a generalization of
\textit{cycle-consistency loss} in CycleGAN \citep{CycleGAN}, analogous to the
generalization of the cycle-consistency condition in Section \ref{sec:model_schema}.

\begin{definition}
 Let $A \doublerightarrow{f}{g} B$ and suppose there exists a path equivalence
 $f = g$. For the equivalence $f = g$ and the model $\fr
 \xrightarrow{\mathsf{Model}_i} \euc$ we define a \textbf{path equivalence loss} $\mathcal{L}_{\sim}^{f, g}$   as follows:
 \[
 \mathcal{L}_{\sim}^{f, g} := \mathbb{E}_{a \sim
    D_E(A)} \big[ {\vert \vert \model_i(f)(a) - \model_i(g)(a) \vert \vert}_1
  \big]
  \]
\end{definition}

This enables us to state the total loss simply as a weighted sum of
adversarial losses for all generators and path equivalence losses for all
equations.

\begin{definition}\label{def:total_loss}
The \textbf{total loss} is given as the sum of all adversarial and path
equivalence losses:
\begin{align}
  \label{eq:total_loss}
  \mathcal{L}_i &:= \sum_{A \xrightarrow{f} B \in \gen{\fr}}
 {\mathcal{L}_{GAN}^B(\bg_f, \bd_B)} + \gamma\sum_{f = g
  \in \sim} \mathcal{L}_{\sim}^{f, g}
\end{align}
where $\gamma$ is a hyperparameter that balances between the adversarial loss
and the path equivalence loss.
\end{definition}

Observe that identity morphisms are not elements of $\gen{\fr}$. Even if they
were, they would cause the discriminator to be steered in two directions. They
would incentivize the discriminator to classify samples of $D_E(A)$ as real, but also to classify samples of $D_E(A)$ under $\bg_{id_A}: A \rightarrow A$ as fake. But notice that
$\bg_{id_A}$ does not change the samples, so it would be pushed to classify the
same samples as both real and fake -- making the net result is zero as these
effects cancel each other out.

\subsection{Restriction of Network Instance to the Dataset}

We have seen how data relates to the architecture and how model $\model_{p_i}$
corresponding to a parameter $p_i \in \ps(\arch)$ is updated.
Observe that network instance $H_p$ maps each object $A \in Ob(\fr)$ to the
entire embedding $H_{p_i}(A) = E(A)$, rather than just the concept $\cpt(A)$. Even though we started out with an embedding $E(A)$, we want to narrow that embedding down just to the set of instances corresponding to some concept $A$.

For example, consider a diagram such as the one in Figure \ref{fig:birdseye} (a). Suppose the result of a successful training was a functor $\fr \xrightarrow{H} \set$.  Suppose that the image of $h$ is $H(h) : \zo \rightarrow \gimg$. As such, our interest is mainly the
restriction of $\gimg$ to $\cpt(\mathrm{Image})$, the image of $\zo$ under $H(h)$, rather than the
entire $\gimg$. In the case of horses and zebras in Figure \ref{fig:birdseye} (b), we are interested in a map
$\cpt(\mathrm{Horse}) \rightarrow \cpt(\mathrm{Zebra})$ rather than a map $\gimg
\rightarrow \gimg$.
In what follows we show a construction which restricts some $H_p$ to its
smallest subfunctor which contains the dataset $D_E$.
Recall the previously defined forgetful functor $\euc \xrightarrow{U} \set$ and the
inclusion $\vfr \xhookrightarrow{I} \fr$.

\begin{definition}
Let $\Fun{D}_E: \vfr \rightarrow \set$ be the \textbf{dataset}. Let $\fr
\xrightarrow{H_p} \set$ be the network instance on $\fr$. The \textbf{restriction}
of $H_p$ to $D_E$ is a subfunctor of $H_p$ defined as follows:
\[
\ih := \bigcap\limits_{\{G \in
    Sub(H_p)) \mid \Fun{D}_E \subseteq \Fun{G} \circ \Fun{I}\}}G
\]
where $Sub(H_p)$ is the set of subfunctors of $H_p$.
\end{definition}

This definition is quite condensed so we supply some intuition.
We first note that the meet is well defined because each $G$ is a subfunctor of $H$.

In Figure \ref{fig:data_functors} we depict the newly defined constructions
using a commutative diagram.

\tikzset{
    labl2/.style={anchor=south, rotate=-30, inner sep=.5mm}
}
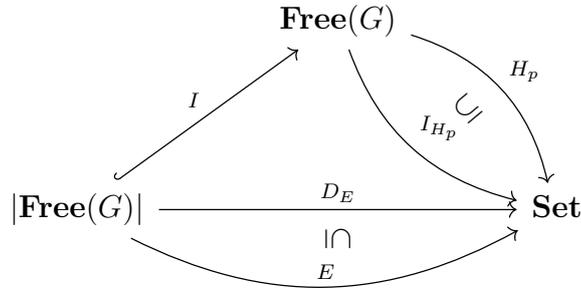
\begin{figure}[H]
\centering
\begin{tikzcd}[column sep=40pt, row sep=50pt]
  & \fr \arrow[dr, "\mathbin{\rotatebox[origin=c]{40}{$\subseteq$}}", phantom,
  bend left=10] \arrow[dr, "H_p", bend left] \arrow[dr, "\ih", bend right=28] & \\
  \vfr \arrow[ru, hook, "I"] \arrow[rr, phantom, "\mathbin{\rotatebox[origin=c]{-90}{$\subseteq$}}", bend right=10] \arrow[rr, "D_E"] \arrow[rr, "E", bend right=28] & & \set
\end{tikzcd}
\caption{The functor $\ih$ is a subfunctor of $H_p$, and $D_E$ is a
  subfunctor of $\ih \circ I$.}
\label{fig:data_functors}
\end{figure}

It is useful to think of $I_H$ as a restriction of $\Fun{H}$ to the \textit{smallest} functor which fits all data and mappings between the data.
This means that $\ih$ contains all data samples specified by $D_E$.
\begin{corollary}
$D_E$ is a subfunctor of $\ih \circ I$:
\end{corollary}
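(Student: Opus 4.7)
The plan is to verify the corollary directly from the definition of $\ih$ as a meet of subfunctors. Since $\ih$ is defined as the intersection of the family
\[
\mathcal{F} = \{G \in Sub(H_p) \mid D_E \subseteq G \circ I\},
\]
the statement $D_E \subseteq \ih \circ I$ amounts to showing that the defining property of the family is preserved under taking the intersection, and that the intersection is actually well-defined.

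First I would verify that $\mathcal{F}$ is non-empty, so that the meet is sensible. The obvious candidate is $H_p$ itself: trivially $H_p \in Sub(H_p)$, and I only need to check $D_E \subseteq H_p \circ I$. By Corollary~\ref{cor:action_on_objects} the action of $H_p$ on objects is independent of $p$ and coincides with the action of $\arch$ on objects; since the embedding $E$ agrees with $\arch$ on objects, we have $H_p(I(A)) = E(A)$ for every $A \in Ob(\vfr)$. Combined with $D_E \subseteq E$ from Definition~\ref{def:datasetf}, this gives $D_E(A) \subseteq H_p(I(A))$ pointwise, i.e.\ $H_p \in \mathcal{F}$.

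Next I would carry out the pointwise computation. For any $A \in Ob(\vfr)$,
\[
(\ih \circ I)(A) \;=\; \ih(I(A)) \;=\; \Bigl(\bigcap_{G \in \mathcal{F}} G\Bigr)(I(A)) \;=\; \bigcap_{G \in \mathcal{F}} G(I(A)).
\]
Each $G \in \mathcal{F}$ satisfies $D_E(A) \subseteq G(I(A))$ by assumption, hence $D_E(A)$ is contained in the intersection on the right. Because the domain $\vfr$ is discrete, subfunctor inclusion between functors out of $\vfr$ is entirely determined by the pointwise inclusions on objects, so this already yields $D_E \subseteq \ih \circ I$ as subfunctors.

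There is no real obstacle here; the only subtlety is justifying that the meet in the definition of $\ih$ is a bona fide subfunctor of $H_p$ (not merely a collection of subsets), which is standard since subfunctors of a fixed functor into $\set$ form a complete lattice under pointwise intersection and $\mathcal{F}$ is non-empty by the step above. Thus the corollary follows from the universal property of this meet together with the non-emptiness witness supplied by $H_p$ itself.
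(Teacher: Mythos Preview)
Your proposal is correct and follows essentially the same approach as the paper, which simply observes that $\ih$ is the intersection of all subobjects of $H_p$ whose composite with $I$ contains $D_E$, hence $\ih \circ I$ contains $D_E$ as well. You have supplied more detail than the paper does (non-emptiness of the indexing family via $H_p$, the pointwise computation, and the remark that subfunctors into $\set$ form a complete lattice), but the underlying argument is identical.
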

\begin{proof}
This is straightforward to show, as $\ih$ is the intersection of all subobjects
of $H$ which, when composed with the inclusion $I$ contain $D_E$. Therefore $\ih
\circ I$ contains $D_E$ as well.
\end{proof}

Even though Corollary \ref{cor:action_on_objects} tells us that all $H_p$ act
the same on objects, we can see that this is not the case with $\ih$.

Consider some $A \xrightarrow{f} B$ in $\fr$. Suppose we have two two different
network instances, $\RR^a \xrightarrow{H_p(f)} \RR^b$ and $\RR^a
\xrightarrow{H_q(f)} \RR^b$. Even though these instances act the same on
objects, their restrictions to data $D_E$ might not. Depending on the
implementations of $H_p(f)$ and $H_q(f)$, images of $\RR^a$ under $H_p(f)$ and
$H_q(f)$ might be different, and as such impose different constraints on the
minimum size of their restriction to $D_E$ on the object $B$. This in
turn means that the sets $I_{H_p}(B)$ and $I_{H_q}(B)$ might be vastly different
and practically agree only on the $D_E(B)$.
More generally, for any $X \in Ob(\fr)$ and $X \xrightarrow{g} A$, $\ih(A)$ contains the image of $D_E(X)$ under $H(X)
\xrightarrow{H(g)} H(A)$.


\subsection{Path Equivalence Relations}\label{sec:path_equiv}

There is one interesting case of the total loss -- when the total
path-equivalence loss is zero: $\sum_{f = g \in \sim}
\mathcal{L}_{\sim}^{f, g} = 0$. This tells us that $H(f) =
H(g)$ for all $f = g$ in $\sim$.
In what follows we elaborate on what this means by recalling how $\fr$ and
$\qfr$ are related.

So far, we have been only considering schemas given by $\fr$. This indeed is a limiting
factor, as it assumes the categories of interest are only those without any
imposed relations $R$ between the generators $G$. One example of a schema
\textit{with} relations is the CycleGAN schema \ref{fig:birdseye} (b).
As briefly mentioned in Remark \ref{rem:relations}, fixing a functor $\qfr
\rightarrow \set$ requires its image to satisfy any relations imposed by $\qfr$.
As neural network parameters usually are initialized randomly, any such image in
$\set$ will most surely not preserve such relations and thus will not be a
proper functor whose domain is $\qfr$.

However, this construction is a functor if we consider its domain to be $\fr$.
Furthermore, assuming a successful training process whose end result is a
path-equivalence relation preserving functor $\fr \rightarrow \set$, we show this induces an unique $\qfr \rightarrow \set$ (Figure \ref{fig:quot_cat}).

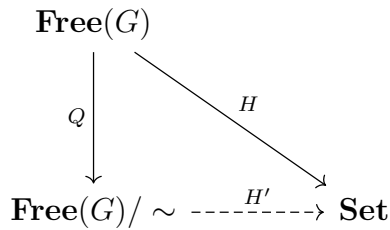
\begin{figure}[H]
\centering
\begin{tikzcd}[column sep = 50pt, row sep = 50pt]
\free(G) \arrow[rd, "\Fun{H}"] \arrow[d, "\Fun{Q}"'] &  \\
\qfr \arrow[r, dashed, "\Fun{H'}"]
& \Cat{Set}
\end{tikzcd}
\caption{Functor $H$ which preserves path-equivalence relations factors uniquely
  through $Q$.}
\label{fig:quot_cat}
\end{figure}

\begin{theorem}\label{thm:quot_theorem}
Let $\Fun{Q}: \fr \rightarrow \qfr$ be the quotient
functor and let $\Fun{H}: \fr \rightarrow \set$ be a path-equivalence relation
preserving functor. Then there exists a unique functor $\Fun{H'}: \qfr \rightarrow \set$ such that $\Fun{H'}\circ \Fun{Q} = \Fun{H}$.
\end{theorem}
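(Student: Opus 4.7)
The plan is to invoke the standard universal property of the quotient category, adapted to this setting. The quotient functor $Q : \fr \to \qfr$ is the identity on objects and sends each morphism $f$ to its $\sim$-equivalence class $[f]$. So I would define $H' : \qfr \to \set$ by setting $H'(A) := H(A)$ on objects and $H'([f]) := H(f)$ on morphisms, and then verify in turn that (i) this is well-defined on equivalence classes, (ii) it is a functor, (iii) it satisfies $H' \circ Q = H$, and (iv) it is the unique such functor.

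For well-definedness, I would take two representatives $f, g : A \to B$ of the same equivalence class, so that $f = g$ in $\sim$. Because $H$ is assumed to be path-equivalence preserving, we have $H(f) = H(g)$, so the value $H'([f]) = H(f)$ does not depend on the chosen representative. Functoriality then splits into two easy checks: $H'$ preserves identities since $H'([\mathrm{id}_A]) = H(\mathrm{id}_A) = \mathrm{id}_{H(A)} = \mathrm{id}_{H'(A)}$, and $H'$ preserves composition since, for any composable $[f]$ and $[g]$ in $\qfr$, we have $H'([g] \circ [f]) = H'([g \circ f]) = H(g \circ f) = H(g) \circ H(f) = H'([g]) \circ H'([f])$, using that composition in the quotient category is defined representative-wise. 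The commutativity $H' \circ Q = H$ is then immediate: on objects $H'(Q(A)) = H'(A) = H(A)$, and on morphisms $H'(Q(f)) = H'([f]) = H(f)$.

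For uniqueness, suppose $H'' : \qfr \to \set$ is another functor with $H'' \circ Q = H$. Since $Q$ is the identity on objects, $H''(A) = H''(Q(A)) = H(A) = H'(A)$ for every $A$. Since every morphism in $\qfr$ has the form $[f] = Q(f)$ for some $f$ in $\fr$ (i.e.\ $Q$ is surjective on morphisms), we also get $H''([f]) = H''(Q(f)) = H(f) = H'([f])$, and hence $H'' = H'$.

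The only step requiring care is well-definedness on morphisms: one has to make sure to use \emph{all} generating relations in $\sim$, together with the fact that $\sim$ is a congruence (closed under pre- and post-composition), so that the equality $H(f) = H(g)$ extends from the generating relations to every pair of $\sim$-equivalent morphisms. This is exactly the hypothesis that $H$ is path-equivalence preserving, so the argument goes through without obstruction; there is no genuinely hard step, the content of the theorem is essentially the universal property of $Q$ applied in $\set$.
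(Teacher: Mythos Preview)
Your argument is correct and is precisely the standard proof of the universal property of a quotient category. The paper itself does not spell this out: its entire proof is a reference to Mac~Lane, \emph{Categories for the Working Mathematician}, Section~2.8, Proposition~1, which is exactly the result you have reproduced. So there is no difference in approach---you have simply written out what the paper cites.
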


\begin{proof}
We refer the reader to \cite{WorkingMathematician}, Section 2.8., Proposition 1.
\end{proof}

Finding such a functor $\Fun{H}$ is no easier task than finding a functor
$\Fun{H'}$. However, considering the domain to be $\fr$, rather than $\qfr$
allows us to initially just guess a functor $\Fun{H_0}$, since our initial
choice will not have to preserve the equivalence $\sim$.
As training progresses and the path-equivalence loss of a network instance $\fr
\xrightarrow{H_p} \set$ converges to zero, by Theorem \ref{thm:quot_theorem} we
show $H_p$ factors uniquely through $\fr \xrightarrow{Q} \qfr$ via $\qfr
\xrightarrow{H'} \set$.

\subsection{Functor Space}\label{sec:functor_space}

Recall the parameter-functor map (Definition \ref{def:param_functor_map})
$\mathcal{M}_{\euc}: \ps(\arch) \rightarrow Ob(\euc^{\fr})$.
Each choice of $p \in \ps(\arch)$ specifies a functor of type $\fr
\rightarrow \set$.
In this way exploration of the parameter space amounts to exploration of part
of the functor category $\set^{\fr}$.
Roughly stated, this means that a choice of an architecture $\fr \xrightarrow{\arch}
\parae$ adjoins a notion of \textit{space} to the image of $\ps(\arch)$ under
$\mathcal{M}_{euc}$ in the functor category $\euc^{\fr}$. This space inherits
all the properties of $\euc$.

By using gradient information to search the parameter space $\ps(\arch)$, we
are effectively using gradient information to search part of the functor space
$\set^{\fr}$. Although we cannot explicitly explore just $\set^{\qfr}$,
we penalize the search method for veering into the parts of this space where the
specified path equivalences do not hold. As such, the inductive bias of the
model is increased without special constraints on the datasets or the embedding space --
we merely require that the space is differentiable and that is has a sensible notion of
distance.
Note that we do not claim inductive bias is \textit{sufficient} to guarantee
training convergence, merely that it is a useful regularization method applicable to
a wide variety of situations. 

As categories can encode complex relationships between concepts and as functors map
between categories in a structure-preserving way -- this enables
\textit{structured learning} of concepts and their interconnections in a very
general fashion. Of course, this does not tell us which such structures are
practical to learn or what other inductive biases we need to add, but simply
disentangles those structures such that they can be studied in their own right.

To the best of our knowledge, this represents a considerably different approach to
learning. Rather than employing a gradient-based search in \textit{one} function
space $Y^X$ this formulation describes a search method in a \textit{collection} of
such function spaces and then regularizes it to satisfy any composition rules
that have been specified.
Even though it is just a rough sketch, this concludes our reasoning which shows functors of the type $\fr \rightarrow \set$ can be \textit{learned} using gradient-descent.

\section{From Categorical Databases to Deep Learning}\label{sec:fdm_correspondence}


The formulation presented in this thesis bears a striking and unexpected
similarity to Functorial Data Migration (FDM) \citep{FunctorialDataMigration}.
Given a \textit{categorical schema} $\cc$ on some directed multigraph $G$, FDM specifies a functor
category $\Cat{Set}^{\cc}$ of database instances on that schema. The notion of \textit{data integrity} is captured by \textit{path equivalence
  relations} which constrain the schema by demanding specific paths in the graph
be the same.
The analogue of data integrity is captured by cycle-consistency conditions,
first introduced in CycleGAN \citep{CycleGAN}.
In Table \ref{tab:rosetta} we outline the main differences between the approaches.
Namely, our approach requires us not to specify functors into $\set$, but rather
to \textit{learn} them.

This shows that the underlying structures used for specifying data semantics and
concrete instances for a given database system are equivalent to the structures
used for describing network semantics and their concrete instances.

\begin{table*}
  \centering
  \begin{center}
    \begin{tabular}{| l | c | c |}
      \hline
      & $\qfr \rightarrow \set $ &  $\fr \rightarrow \qfr$ \\ \hline
      Functorial Data Migration & Fixed  & Data integrity \\ \hline
      Compositional Deep Learning & Learned & Cycle-consistency \\ \hline
    \end{tabular}
  \end{center}
  \caption{Pocket version of the Rosetta stone between Functorial Data Migration and Compositional Deep Learning}
  \label{tab:rosetta}
\end{table*}

\chapter{Examples}\label{ch:examples}

We have seen how specific concepts in deep learning fit into a rigorous
categorical framework. We shed further light on this framework by considering
some of its interesting special cases. Namely, we show how different instances
of a task $\task$ give rise not just to two existing architectures, but also to a completely new system.

\section{Existing Architectures}

We first shift our attention to two existing architectures: GAN and CycleGAN.
We specify the choice of $G, \sim$ and the dataset $\fr \xrightarrow{D_E} \set$
and show this determines our interpretation of the learned semantics.

\subsection{GAN Task}

The choice of a graph and path equivalence relations
corresponding to the GAN task can be identified with the schema in Figure
\ref{fig:birdseye} (a) and a choice of a dataset.
We will abbreviate ``Latent space'' with $LS$ and ``Image space'' with
$IS$.
A choice of a dataset can be understood as follows.
We think of  $D_E(LS) = \zo$ as the choice of uniform
distribution in some high-dimensional space and $D_E(IS) \subseteq \gimg$ as the
choice of some image dataset, such as faces, cars, or handwritten digits.
The choice of a dataset fixes the embedding $\vfr \xrightarrow{E} \set$, but also
the choice of a concept $\cpt$. In our case $\cpt(IS)$ is the set of $64 \times
64$ \texttt{RGB} images human faces and $\cpt(LS)$ is some notion of an indexing
set. As the corresponding dataset $D_E(IS)$ does not have any meaningful
interpretation, neither does $\cpt(IS)$.

There are no path equivalences here, so the total loss reduces just to the
adversarial loss. Moreover, since the only generator in this schema is $LS
\xrightarrow{h} IS$, the total adversarial loss reduces just to the standard
minimax loss.
As per Example \ref{ex:param_function_special_case}, the parameter space of any
chosen model $\fr \xrightarrow{\model} \parae$ reduces to the parameter space of
the function $\arch(h)$.

In a similar fashion, the network instance $H_p$ can be identified with a choice
of a function $H_p(h)$. Observe that the image some function $\zo
\xrightarrow{H_p(h)} \sgimg$ might only partially overlap the dataset $D_E(IS)$.
Nevertheless, the induced $\ih$ allows us to consider the union of the dataset
and the image of this function.

\subsection{CycleGAN Task}

CycleGAN task corresponds to the choice of schema in Figure
\ref{fig:birdseye} (b) and a choice of two datasets for each of the objects whose corresponding concepts are isomorphic.
As previously mentioned, these could be sets of images of \textit{apples and oranges},
\textit{paintings and photos}, \textit{horses and zebras} etc.

The parameter space of some chosen architecture $\fr \xrightarrow{\arch} \parae$
for this schema is the product $\ps(\arch) = \arch(f) \times \arch(g)$.
There are two discriminators -- one which distinguishes between real and fake
horse images and the one which distinguishes between real and fake zebra images.
Unlike in the GAN task, the total loss does not reduce just to the adversarial
one. Furthermore, the adversarial component of the loss is the sum of \textit{two}
minimax losses, for $(\bg_f, \bd_B)$ and $(\bg_g, \bd_A)$.

\section{Product Task}\label{sec:product_schema}

We have given two examples of a categorical \textit{schema}.
Each of them has specific semantics which can be interpreted. CycleGAN posits two
objects $A$ and $B$ are \textit{the same} in a certain way by learning an
isomorphism $A \cong B$. GAN has a simple semantic which tells us that one
object $A$ indexes at least part of another object $B$ in some way -- i.e.~that there exists a mapping $A \rightarrow B$. Together
with the choice of a dataset, they constitute a choice of a task.

We now present a different choice of a dataset for the CycleGAN schema which
makes up a task we will call \textit{the product task}. 
The interpretation of this task comes in two flavors. From one perspective, it
is a simple change of dataset for the CycleGAN schema. The other
perspective justifies the name \textit{product task} by grounding the
explanation in categorical terms.

Just as we can take the product of two real numbers $a, b \in \RR$ with a
multiplication function $(a, b) \mapsto ab$, we show we can take a product of
some two sets of images $A, B \in Ob(\set)$ with a neural network of type $A \times B
\rightarrow C$. We will show $C \in Ob(\set)$ is a set of images which possesses
all the properties of a categorical product. 

In a Cartesian category such as $\set$ there already exists a notion of a
categorical product -- the Cartesian product. Recall that it is unique up to unique isomorphism.
This means that any other object $X$ which satisfies the universal property of
the categorical product of $A$ and $B$ is uniquely isomorphic to $A \times B$.
This isomorphism will be central to the notion of the product task.
Before continuing, we supply some intuition with Figure
\ref{fig:product_intuition}.

\begin{figure}[H]
\centering
\begin{tikzcd}[column sep = 30pt, row sep = 50pt]
& {\includegraphics[width=0.06\textwidth]{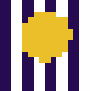}} \arrow[dl, "\theta_A"] \arrow[dr, "\theta_B"]\arrow[rrrr, dashed, bend
left=20, "d"] \arrow[rrrr, phantom, "\cong"] & & & & {\includegraphics[width=0.06\textwidth]{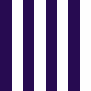}} \raisebox{9pt}{$\times$} {\includegraphics[width=0.06\textwidth]{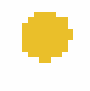}} \arrow[dl, "\pi_A"]
\arrow[dr, "\pi_B"] \arrow[llll, dashed, bend left=20, "c"] & \\
{\includegraphics[width=0.06\textwidth]{new_circles/stripes}}  & & {\includegraphics[width=0.06\textwidth]{new_circles/circle}} &  & {\includegraphics[width=0.06\textwidth]{new_circles/stripes}} & & {\includegraphics[width=0.06\textwidth]{new_circles/circle}}
\end{tikzcd}
\caption{Two different notions of a categorical product of $A$ and $B$ in $\set$,
  illustrated with examples from some choice of those sets. Right
  side depicts the Cartesian product $A \times B$, while the left side depicts
  some different product which we call $AB$. Both of them have a notion of
  projections to their constituents. By the universal property
  of the categorical product, they are uniquely isomorphic. This central
  isomorphism represents the CycleGAN perspective of the product task. The
  decomposition map $d: AB \rightarrow A \times B$ and the composition map $c: A
  \times B \rightarrow AB$ are things we want to learn.}
\label{fig:product_intuition}
\end{figure}

By \textit{learning} the model $\fr \xrightarrow{\model} \euc$
corresponding to the isomorphism $AB \cong A \times B$
we are also learning the projection maps $\theta_A$ and $\theta_B$. This follows
from the universal property of the categorical product: $\pi_A \circ d =
\theta_A$ and $\pi_B \circ d = \theta_B$.
Note how $AB$ differs from a Cartesian product. The domain of the
corresponding projections $\theta_A$ and $\theta_B$ is not a simple pair of
objects $(a, b)$ and thus these projections cannot merely discard an element.
$\theta_A$ needs to learn to remove $A$ from a potentially complex domain. As
such, this can be any complex, highly non-linear function which satisfies
coherence conditions of a categorical product.

We will be concerned with supplying this new notion of the product $AB$ with a
dataset and learning the image of the isomorphism $AB \cong A
\times B$ under $\fr \xrightarrow{\model} \set$. We
illustrate this on a concrete example. Consider a dataset $A$ of images of human
faces, a dataset $B$ of images of sunglasses, and a dataset $AB$ of people
\textit{wearing} glasses. Learning this isomorphism amounts to learning two
things: (i) learning how to decompose an image of a person wearing glasses $(ab)_i$ into an image of
a person $a_j$ and image $b_k$ of these glasses, and (ii) learning how to map this person $a_j$ and perhaps some other glasses $b_l$ into an image of a
person $a_j$ wearing glasses $b_l$. Generally, $AB$ represents some sort of composition of
objects $A$ and $B$ in the image space such that all information about $A$ and
$B$ is preserved in $AB$.
Of course, this might only be approximately true. Glasses usually cover a part
of a face and sometimes its dark shades cover up the eyes -- thus losing
information about the eye color in the image and rendering the isomorphism
invalid. However, in this thesis we ignore such issues and assume
that the networks $\arch(d)$ can learn to unambiguously fill part of the face
where the glasses were and that $\arch(c)$ can learn to generate and superimpose the glasses on the relevant part of the face. 

This example shows us that product task is tightly linked with composition and
decomposition of images.
Although we use the same CycleGAN schema from Figure
\ref{fig:birdseye} (b), we label one of its objects as $AB$ and the other one as
$A \times B$. Note that this does not change the schema itself, the labeling is merely for our convenience. We highlight that
the notion of a product or its projections is not captured in the schema itself.
As schemas are merely categories presented with generators $G$ and relations
$R$, they lack the tools needed to encode a complex abstraction such as a
universal construction. So how do we capture the notion of a product?

In this thesis we frame this simply as a specific dataset
functor, which we now describe. 
A dataset functor corresponding to the product task maps the object $A \times B$ in
CycleGAN schema to a \textit{Cartesian product of two datasets}, $D_E(A \times B) =
\{a_i\}_{i=0}^N \times \{b_j\}_{j=0}^M$. It maps the object $AB$ to a dataset
$\{(ab)_i\}_{i=0}^N$. In this case $ab$, $a$, and $b$ are free to be any
elements of datasets of a well-defined concept $\cpt$.
Although the difference between the product task and the CycleGAN task boils down to a different choice of a dataset functor, we note this is a key aspect which allows for a significantly different interpretation of the task semantics.\footnote{
  Just as we can consider a joint probability distribution of two random
  variables $X$ and $Y$, here we consider a joint \textit{dataset} of two
  datasets $\{\mathrm{a}_i\}_{i=0}^N$ and $\{b_j\}_{j=0}^M$. In a similar
  fashion we consider the Cartesian product $\bd(A) \times \bd(B)$ of two
  discriminators $\bd(A)$ and $\bd(B)$ to be a discriminator itself.}


We now describe two possible classes of choice of datasets for the product task.
One is already given in the example with glasses and faces.
This example includes three distinct datasets in its consideration. However, observe that sometimes three such related datasets might not always be available.
For instance, consider the scenario where we only possess the dataset of faces
with glasses $AB$ and a dataset of just faces $A$, but not a dataset of images
of glasses. This prevents us from generating images of glasses which
some person is wearing. But it does not prevent us from using the product task.
Namely, we know that we can decompose the point in some image space
corresponding to a person with glasses $(ab) \in D_E(AB)$ into point in some
image space of a person $a$ and of a point in some different space which
captures whatever the missing information between $(ab)$ and $a$ is. In this
case, this point would parametrize the color, shape, and type of glasses. 

Put another way, one possible choice for the action of a dataset on $A \times B$ is
$D_E(A \times B) = \{\mathrm{Faces}_i\}_{i=0}^N \times \zo$. Similarly, as a
regular GAN does not allow us to know in what way the object $LS$ indexes $IS$,
here do we not know how $\zo$, together with $D_E(B)$, indexes $D_E(AB)$.
This represents one perspective on the many-to-many mappings introduced by
\cite{AugmentedCycleGAN}. This is a map of a face to a face with glasses as a
one-to-many map.

The product schema admits a straightforward generalization to $n$ objects. For
example, the isomorphism $ABC \cong A \times B \times C$ can be understood as a
one-to-many map between $A \times B$ and $ABC$, but also as a one-to-many map
between $A$ and $ABC$.

With this in mind, we now envision potential applications of the product schema.

Consider the task of composing a car $A$ with a road $B$. In most cases there is certainly more than one way to do so -- by putting the car on different parts of this road. By generalizing the product schema to more than two datasets we begin to explore its full potential.

We could consider an isomorphism $ABC \cong A \times B \times C$, where $D_E(A)$
is a set of car images, $D_E(B)$ is a set of road images and $D_E(C) = \zo$ is
some choice of uniform distribution, similarly as with the GAN task.
Now, given a car $a \in \cpt(A)$, a road $b \in \cpt(B)$ and some parametrization the
position of car on the road $c \in \cpt(C)$, we can learn an unique composition
and decomposition function.
Observe that the purpose of $C$ is to add information on \textit{whatever is
  missing} from $A \times B$ so that the product of $A \times B$ with this
missing information is isomorphic to $ABC$.

By considering $A$ as some \textit{image background} and $B$ as the
\textit{object} which will be inserted, this allows us to interpret $d$ and $c$
as maps which \textit{remove an object from the image} and \textit{insert an
  object} in an image, respectively. They do it in a coherent, parametric way, such that
$B$ parametrizes the type of object which will be inserted into the domain $A$
and $C$ all the extra information needed to make insertion and removal inverses.
Furthermore, we restate that is all done on unpaired data. As such, no extra structure is
imposed on the \textit{sets} $A$, $B$ and $AB$, easing the data collection
process.

This seems like a novel method of object generation and deletion with unpaired
data, though a more thorough literature survey should be done to confirm this.

\chapter{Experiments}\label{ch:experiments}

In this chapter, we put on our engineering hat and test whether the product
task described in Section \ref{sec:product_schema} can be trained in practice.
We perform experiments on three datasets. The first experiment is done on a
dataset created by us and the second is done on a well-known image dataset of
celebrities.
The third experiment is significantly different and involves the usage of an
already pretrained generative model.
As such the choices of architecture and hyperparameters for first two
experiments differs from the choices for the third one.

In the first two experiments we have used optimizer Adam \citep{Adam} and the
Wasserstein GAN with gradient penalty \citep{ImprovedWGAN}. We used the
suggested choice of hyperparameters in \cite{ImprovedWGAN}.
The parameter $\gamma$ is set to $20$ and as such weighted the optimization procedure towards the path-equivalence, rather than the cycle-consistency loss.
All weights were initialized from a Gaussian distribution $\mathcal{N}(0,
0.01)$.
As suggested in \citep{ImprovedWGAN}, we always gave the discriminator a head start and trained it more, especially in the beginning. We set $n_{critic} = 50$ for the first 50 time steps and $n_{critic} = 5$ for all other time steps.

Discriminator $\bd(AB)$ and the discriminators for each $A$ and $B$ in $\bd(A
\times B)$ in first two experiments were $5$-layer ReLU convolutional neural networks
of type $\RR^q \times \RR^{32 \times 32 \times 3} \rightarrow \RR$. Kernel size
was set to $5$ and padding to $2$. We used stride $2$ to halve image size in all layers except the second, where we used stride $1$.
The only things varying in Sections \ref{sec:circles} and \ref{sec:celeba} were the filter sizes.
We used a fully-connected layer without any activations at the end of the
convolutional network to reduce the output size to $1$.

\section{Circles}\label{sec:circles}

We created datasets whose samples are shown in a suggestive manner in Figure
\ref{fig:circles_product}.

\begin{figure}[H]
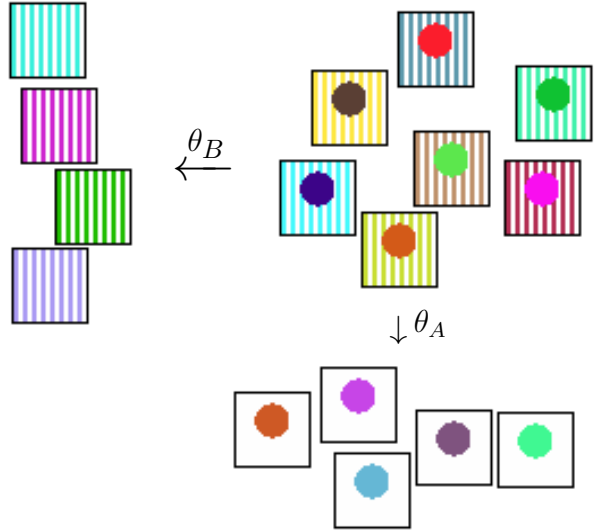

\begin{center}
\begin{overpic}[width=0.5\textwidth]
  {new_circles/product.png}
 \put (28,60) {\Large$\xleftarrow{\theta_B}$}
 \put (65,37) {\rotatebox{-90}{$\rightarrow$}}
 \put (69,34) {$\theta_A$}
\end{overpic}
\end{center}
\caption{A dataset of stripes and circles can be decomposed sensibly into a
  dataset of circles and a dataset of stripes.}
\label{fig:circles_product}
\end{figure}

Each dataset consists of $32 \times 32$ \texttt{RGB} images of either
a circle (corresponding to $A$ in the schema), stripes (to $B$) or circle on
stripes (to $AB$), all normalized to a $[0, 1]$ range.
The background dataset $D_E(B)$ consists of unicolored stripes on a white background. 
The object dataset $D_E(A)$ consists of unicolored circles in a fixed position on a white image. The dataset $D_E(AB)$ consist of a unicolored circles
superimposed over unicolored stripes on a white background. Each element of
both $D_E(A)$ and $D_E(B)$ can be identified with a $3$-dimensional vector
describing the choice of color of the circle and the stripes, respectively.
Likewise, each element of $D_E(AB)$ can be identified with a
$6$-dimensional vector, describing the color of both the circle and the stripes.

More precisely, we define the dataset $D_E$ as follows: $D_E(A
\times B) = \{\mathrm{Circles}_i\}_{i=0}^N \times
\{\mathrm{Stripes}_j\}_{j=0}^M$ and $D_E(B) =
\{\mathrm{Circles\_on\_stripes}_k\}_{k=0}^K$.

The task of some network architecture is then to learn to decompose an image
into its two constituents. We highlight two important things. We do not provide explicit dataset
pairings to the network. We also do not tell the networks
\textit{what} the images contain. No information was specified about the circle
location or even the fact that there some kind of an object in the image.

We now specify an architecture for this task. It is given by a choice of
morphisms in $\parae$ for two generators $d
: AB \rightarrow A \times B$ and $c : A \times B \rightarrow AB$ in $\fr$.
For both of them, we choose fully-connected networks and thus flatten the images into a $3072$-dimensional vector before using them as inputs to $\model(c)$ and $\model(d)$.
This fixes the type of possible network architectures --  $\RR^p \times \RR^{3072 + 3072} \xrightarrow{\arch(c)} \RR^{3072}$ and $\RR^q \times \RR^{3072} \xrightarrow{\arch(d)} \RR^{3072 + 3072}$.

We implement both networks as simple autoencoders with a bottleneck of dimension $6$.
This means that both of them are incentivized to extract a $6$-dimensional summary of the
image -- the color of the circle and the color of the stripes.
Even though the path-equivalence relations regularize the networks to be inverses of each other, there could actually exist \textit{many} isomorphisms between the objects $\cpt(A)$ and $\cpt(B)$. For instance, the networks could learn to preserve color information by encoding it as a \textit{different} color. 
In this experiment we remedy this issue by adding an auxiliary \textit{identity mapping} loss function, as outlined in \citep{CycleGAN}. It incentivizes the training procedure to pick out a \textit{specific} isomorphism which preserves the color information in a canonical way.


The architectures of the discriminators are described in the beginning of this
chapter, with a sequence of filter sizes for each layer as follows: $[3, 8, 16, 32, 64, 64]$.

\subsection{Evaluation of Trained Networks}

With this in mind, we show the results of the training.
Given that this is a toy task and that the architecture parameter count is quite low
compared to standard deep learning architectures, both networks quickly solve
the task.
The adversarial loss steers the
generators in generating realistic looking images, while the path-equivalence
loss steers the generators in preserving color information.

We test the results in three distinct ways -- by changing either $A$, $B$, or $AB$
while keeping everything else fixed and evaluating the results.
In Figure \ref{fig:change_circle} we test if the network learned to preserve
background color while the circle color is changing.
Figure \ref{fig:change_background} tests if the network learned to preserve
background color while the circle color is changing.
And lastly, Figure $\ref{fig:decompose_circle}$ test the decomposition network.

\begin{figure}[H]
\begin{center}
  {\includegraphics[width=0.6\textwidth]{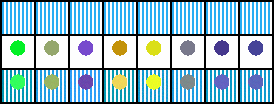}}
\caption{We evaluate the composition network by keeping the \textit{stripe color fixed} and circle color changing. First two rows are the inputs to the network and third
  row shows the results. We can see that both the stripe and circle color in the
  generated image match the expected one.}
\label{fig:change_circle}
\end{center}
\end{figure}

We observe a general pattern throughout these tests. Although the colors match
the expected ones in most cases, there exists a slight discrepancy
between the obtained and the expected results. Most of the experiments we ran exhibited similar symptoms, but we did not look further into this issue.
Nevertheless, they do manage to quickly learn and solve this simple task and as
such prove the viability of the product task.

\begin{figure}[H]
\begin{center}
  {\includegraphics[width=0.6\textwidth]{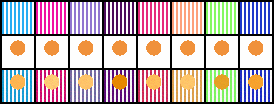}}
\caption{We evaluate the composition network by keeping the \textit{circle color fixed} and stripe color changing. First two rows are the inputs to the network and third
  row shows the results. We observe that although the stripe color matches the
  expected one, circle color varies slightly in brightness.}
\label{fig:change_background}
\end{center}
\end{figure}

\begin{figure}[H]
\begin{center}
  {\includegraphics[width=0.6\textwidth]{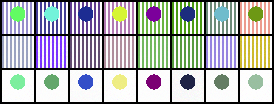}}
\caption{We evaluate the decomposition network whose input is shown in the first row. The second and third row show the generated stripe and circle
  image, respectively. We note that some discrepancy between the expected and
  generated colors exists.}
\label{fig:decompose_circle}
\end{center}
\end{figure}

\section{CelebA}\label{sec:celeba}

The circle dataset was made with the product task in mind and as such was its
canonical example. We had a notion of a dataset with circles,
a dataset with stripes and a dataset which contains both.
However, in the real world, the luxury of obtaining three such datasets related
in a special way is not always there.
We now turn our attention to an existing image dataset which we use as a more
realistic test of the feasibility of the product task.

CelebFaces Attributes Dataset (CelebA) \citep{CelebA} is a large-scale face
attributes dataset with more than 200000 celebrity images, each with 40
attribute annotations. Frequently used for image generation purposes, it fits
perfectly into the proposed paradigm of the product task.
The images in this dataset cover large pose variations and background clutter.
The attribute annotations include ``eyeglasses'', ``bangs'', ``pointy nose'',
``wavy hair'' etc., as boolean flags for every image.

We used the attribute annotations to separate CelebA into two datasets. The
dataset $D_E(AB)$ consists of images with the attribute ``Eyeglasses'', while
the dataset $D_E(A)$ consists of all the other images.

Given that we have no dataset of just glasses, we set $D_E(B) = \zo$. As outlined in section \ref{sec:product_schema}, this is a parametrization of all
the missing information from $A$ such that $A \times B \cong AB$.
We refer to an element $z \in \zo$ as a \textit{latent vector}, in line with
machine learning terminology. We will also use $Z$ instead of $B$, as to make it
more clear we are not generating images of this object.

As before, we highlight that there are no explicit pairs in this dataset -- there does not exist
an image of the same person with and without glasses in completely the same
position. Most of the people in those images appear only once and as such this
dataset provides an interesting challenge for generalization in the context of
the product task (Figure \ref{fig:faces_iso}).

\begin{figure}[H]
\vskip 0.1in
\begin{center}
  {\includegraphics[width=0.4\textwidth]{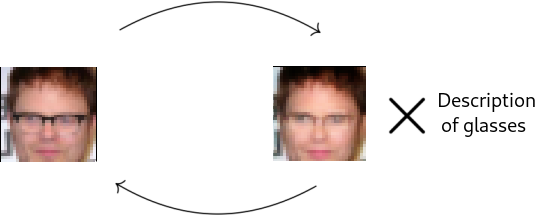}}
\caption{We trained two networks, a \textit{composition} and
  \textit{decomposition} network which learned to insert and remove glasses from
  images, respectively. We hypothesize this task of object removal and insertion
  works in more general scenarios, as nothing in the training process is specific to glasses or people.}
\label{fig:faces_iso}
\end{center}
\end{figure}

We resize all images to $32 \times 32$ to make it possible to train such network
with available compute resources. 
The network architecture of $\arch(f)$ and $\arch(g)$ are combinations of
convolutional layers and residual blocks. Unlike with the circle dataset, we do
not think of these networks as autoencoders because we have no
guarantees on the exact bottleneck size.
The architectures of the discriminators are described in the beginning of this
chapter, with a sequence of filter sizes for each layer as follows: $[3, 64, 128, 256, 512, 512]$.

The task for the decomposition neural network is then to transform an image of a
person wearing glasses into an image satisfying the following: (i) it
needs to be a realistic looking image of a person, and (ii) the identity of the
person needs to be preserved when removing glasses. The first condition depends on
the adversarial loss, while the second one depends on the path-equivalence loss.
The composition neural network has a task to transform an image of a person $a_i$ and
some parametrization of glasses $z_k$ into an image such that: (i) the image contains
a realistic looking person, (ii) the person in the image is wearing glasses,
(iii) the input of any other person $a_j$ with that vector $z$ would produce an image
of the other person wearing \textit{the same glasses}, and (iv) input of any
other glasses $z_l$ would produce the same person wearing \textit{different} glasses.
There are many different conditions here, all enforced by just two
path-equivalence relations.

We highlight that none of the networks were told that images contain
people, glasses, or objects of any kind. 

\subsection{Evaluation of Trained Networks}

Similarly as with the circle dataset, we investigate if these networks generalize
well.
We examine three things: (i) whether it is possible to
\textit{generate an image of a specific person wearing specific glasses}, (ii)
whether we can \textit{change} glasses a person wears by changing the
corresponding latent vector, and (iii) whether the same latent vector
corresponds to the same glasses, independently of the person we pair it with.

We recall that $AB$ refers to the composition, $A$ to faces and $Z$ to
parametrization of glasses.

\begin{figure}[H]
  \centering
\begin{subfigure}[t]{0.5\textwidth}
        \centering
         {\includegraphics[width=0.95\columnwidth]{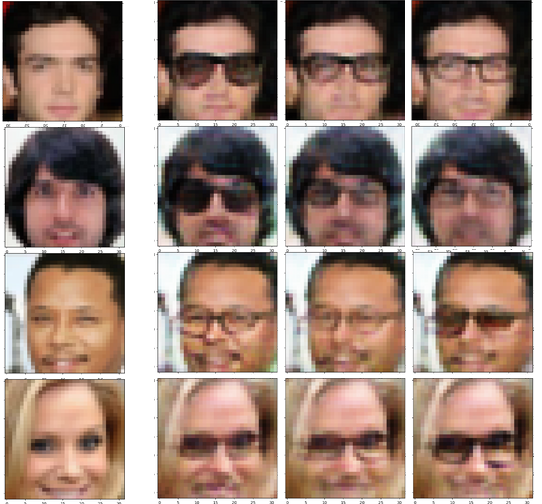}}
        \caption{}
    \end{subfigure}%
\begin{subfigure}[t]{0.5\textwidth}
        \centering
         {\includegraphics[width=0.92\columnwidth]{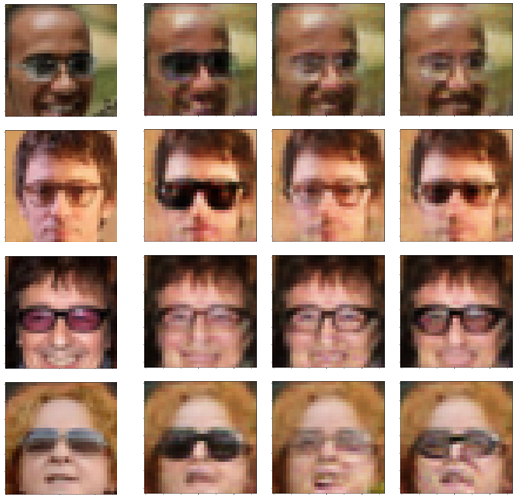}}
        \caption{}
    \end{subfigure}%
 \caption{Parametrically \textit{adding} glasses (a) and \textit{changing}
   glasses (b) on a person's face. (a): the leftmost column shows a sample
   from the dataset $a_i \in D_E(A)$. Three rightmost columns show the
   result of $c(a_i, z_j)$, where $z_j \in \zo$ is a randomly sampled latent
   vector. (b): leftmost column shows a sample from the dataset $(ab)_i \in
   D_E(AB)$. Three rightmost columns show the image $c(\pi_A(d((ab_i))), z_j)$
   which is the result of changing the glasses of a person. The latent vector
   $z_j \in \zo$ is randomly sampled.}
\label{fig:add_change_glasses}
\end{figure}

In Figure \ref{fig:add_change_glasses} (left) we show the model learns the task (i):
generating image of a specific person wearing glasses. Glasses are parametrized
by the latent vector $z \in D_E(Z)$. The model learns to warp the glasses and put them in the right angle and size, based on the shape of the face. This can especially be seen in Figure \ref{fig:z_fixed_montage}, where some of the faces are seen from an angle, but glasses still blend in naturally.
Figure \ref{fig:add_change_glasses} (right) shows the model learning task (ii):
\textit{changing} the glasses a person wears.

Figure \ref{fig:remove_glasses} shows the model can learn to \textit{remove}
glasses. Observe how in some cases the model did not learn to remove the
glasses properly, as a slight outline of glasses can be seen.

An interesting test of the learned semantics can be done by checking if a specific randomly sampled latent vector $z_j$ is consistent across different
images. Does the resulting image of the application of $g(a_i, z_j)$, contain
the same glasses as we vary the input image $a_i$?
The results for that third task (iii) are shown in Figure \ref{fig:z_fixed_montage}. It shows how the network has learned to associate a specific vector $z_j$ to a specific type of glasses and insert it in a natural way.

We note low diversity in generated glasses and a slight loss in image quality,
which is due to suboptimal architecture choice for neural networks. It
can be seen that the network has learned to preserve all the main facial
features.

\begin{figure}[H]
\begin{minipage}[t]{0.45\linewidth}
      {\includegraphics[width=\columnwidth]{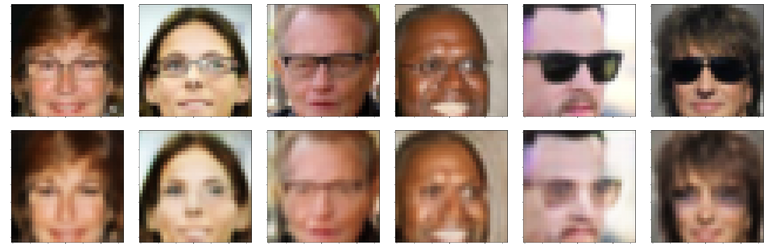}}
      \caption{Top row shows samples ${(ab)_i \in D_E(AB)}$. Bottom
        row shows the result of a function $\pi_A \circ d: AB \rightarrow A$
        which removes the glasses from the person.}
      \label{fig:remove_glasses}
\end{minipage}\qquad
\begin{minipage}[t]{0.45\linewidth}
      {\includegraphics[width=\columnwidth]{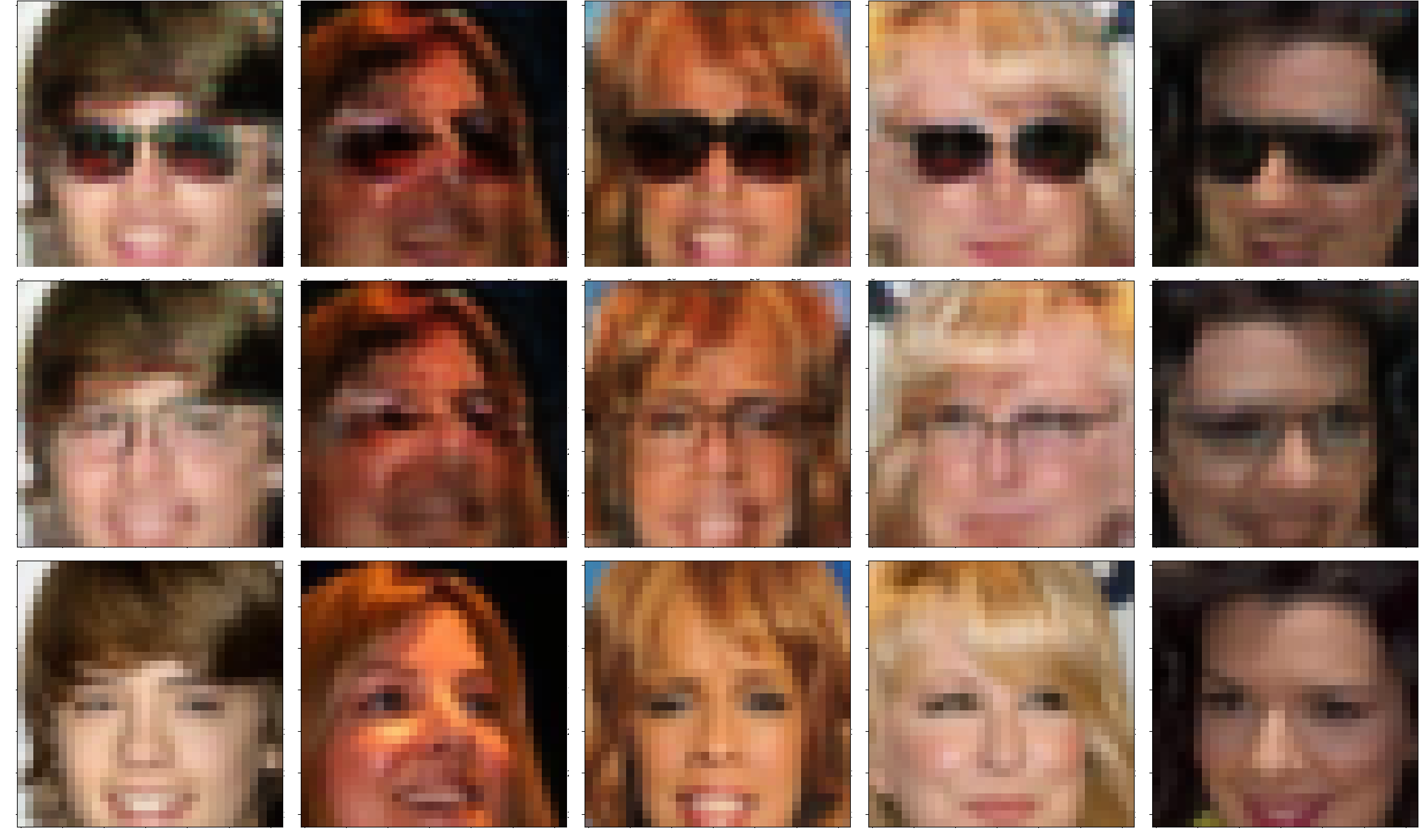}}
      \caption{Bottom row shows true samples $a_i \in D_E(A)$. Top two
        rows show the image $c(a_i, z_j)$ of adding glases with \textit{a specific latent
          vector} $z_1$ for the topmost row and $z_2$ for the middle row. Observe how the general style of the glasses stays the same in a given row, but gets adapted for every person that wears them.}
      \label{fig:z_fixed_montage}
\end{minipage}
\end{figure}

\section{StyleGAN}

In previous sections we have used a dataset $D_E$ of a concept $\cpt$ in some
\textit{image space} to train neural networks.
We now consider the same schema in Figure \ref{fig:birdseye} (b), but conceive a different notion of a dataset to test the product task on -- dataset embedded in some \textit{latent space} of a trained generator.

We turn our attention to a GAN architecture
called StyleGAN \citep{StyleGAN}. We do not attempt to train it but merely use
its pretrained generator. This generator was trained on a dataset of human
faces and is capable of producing high-resolution images of stunning quality, offering a lot of variety in terms
of pose, age, ethnicity, lightning and image background. 
The architecture of the generator -- a complex neural network with a total of $26.2$ million parameters -- will be referred to as a function ${g: \sgzo \rightarrow \sgimg}$.

Given the large variety of generated images, we turn our attention to two specific
classes of images. Observe that there exist two subsets of generated people:
those that are wearing some sort of headwear (such as glasses or a hat) and
those that are not.
Just as in Section \ref{sec:celeba}, we focus to glasses and make these statements more precise. A subset $\cpt(AB) \subseteq \sgzo$ of latent vectors are mapped to the set of images of a people \textit{wearing glasses} and another subset $\cpt(A) \subseteq \sgzo$ is mapped to a set of images of people \textit{not wearing glasses}.

In Section \ref{sec:celeba} we have shown that learning from samples in the
\textit{image space} yields promising results. In this section we hypothesize
that \textit{learning from samples in the latent space} of a trained generator
can produce similar results. As long as the dataset corresponding to the product
task is a dataset of any well-defined concept $\cpt$, we observe that the
product task is \textit{independent} of the semantic interpretation of such a
dataset. Namely, we conjecture that any structure between points in the
\textit{image space} is the image of this structure in the \textit{latent
  space}. For example, it is well-known that interpolation between two points in
the latent space yields realistic looking results when mapped to the image
space. 

As such we embark on a task of collection of data points embedded in the \textit{latent
  space}. These data points correspond to natural images of a chosen class after they
are put through the trained generator.
In other words, we collect two datasets of points which are embedded
\textit{in the latent space}. The datasets are collected as follows.
We randomly sample a vector $z$ from the latent space, put it through the trained
StyleGAN generator and display the result $g(z)$ on the screen as a $1024 \times 1024$
\texttt{RGB} image. 
By creating a script which does that for many points at once and prompts us to
classify the result as an image of a person \textit{wearing glasses} or a person
\textit{not wearing glasses}, we were able to create two such datasets,
$D_E(AB)$ and $D_E(A)$. Sizes of these datasets were $356$ and $1576$,
respectively. This class imbalance stems from the fact that for a randomly
sampled vector $z$, StyleGAN is about
$5$ times more likely to produce an image of a person without, than with glasses.

We highlight that, unlike regular datasets, the elements of
these datasets only have an interpretation \textit{given an already trained
  generator} $g$. In contrast to collected datasets embedded in $\RR^{1024
  \times 1024 \times 3}$ these collected points in $\RR^{512}$ do not have a
canonical interpretation as an image of a person, but rather depend on the
trained generator $g: \sgzo \rightarrow \sgimg$.

Usage of a pretrained generator allows us to separate two notions: (i) learning
the product task, and (ii) learning to generate realistic images of a given class. 
By using a large pretrained network such as StyleGAN to do the actual
image generation, the available computing power can be used solely for learning
the isomorphism in Figure \ref{fig:birdseye} (b).
As a result, all generated images \textit{look realistic from the start}. The
composition and decomposition networks quickly learn to constrain their image to
the part of the latent space StyleGAN generator is trained on, thus making the
output as realistic as an average sample of StyleGAN. The rest of the time is
spent learning \textit{which subset} of the latent space they should focus on.
This translates to choosing what type of person and glasses will be generated.
Before describing the details of this experiment, we show some a selection of results in Figure \ref{fig:showcase}.

\begin{figure}[H]
\begin{center}
  {\includegraphics[width=\linewidth]{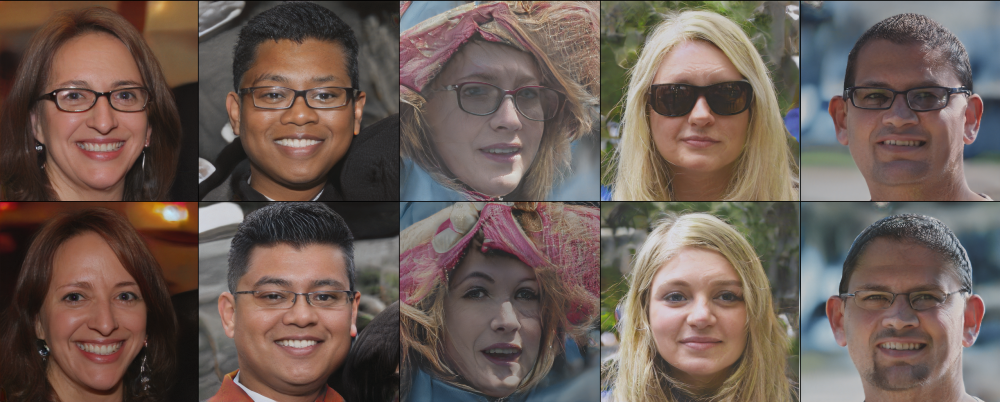}}
 \caption{Top row: input image of a person with glasses; Bottom row: output
   after the glasses have been removed. The entire learning is done in the \textit{latent
     space} of StyleGAN and these images each depict a latent vector $z$ after
   it has been applied to the generator $g$. Note how the network has learned to preserve
   facial features, without being specialized in any way for faces.}
\label{fig:showcase}
\end{center}
\end{figure}

We now restate the problem setting. Given the isomorphism in Figure
\ref{fig:birdseye} (b) we supply two datasets $D_E(AB)$ and $D_E(A)$ of latent
vectors of people with and without glasses, respectively. This dataset fixes the
embedding $\fr \xrightarrow{E} \set$ and thus the possible choice of architectures. The \textit{composition} network
architecture is an element of $\parae(\RR^{512} \times \RR^{512}, \RR^{512})$, while the
\textit{decomposition} network architecture is an element of $\parae(\RR^{512}, \RR^{512}
\times \RR^{512})$.

As these networks are generally orders of magnitude smaller than networks in
Section \ref{sec:celeba}, we had enough computing power at our disposal to try
out a variety of different network architectures and hyperparameter choices.
For both the generators and discriminators we have tested fully-connected,
convolutional and mixed architectures, ranging from just a couple of thousand
parameters to about half a million parameters. We have also tried a range of
choices of hyperparameters. That includes the learning rate, batch size, $\gamma$
(Definition \ref{def:total_loss}) and $n_{critic}$.

Using this dataset we have obtained interesting, but generally limited and
inconsistent results. Namely, we have not succeeded in finding a model that
solves the task in its entirety. All the trained models seem to have trouble learning
to optimize for both the adversarial and path-equivalence loss. Either the
models learn to preserve facial features or they learn to remove and insert
glasses, but never both. We especially note difficulties with the insertion of
glasses into the image.

In the Figures \ref{fig:stylegan_ab1}, \ref{fig:stylegan_ab2}, and \ref{fig:stylegan_a1} we use $c: \RR^{512} \times \RR^{512} \rightarrow
\RR^{512}$ and $d: \RR^{512} \rightarrow \RR^{512} \times \RR^{512}$ to refer to
a trained decomposition and composition network, respectively.
Even though in some samples we can see the model has learned to do an almost
perfect job, we have not been able to obtain uniformly positive results.

We hypothesize that the lack of these results could be for three reasons. The
first is that the product task regularization condition is not always sufficient
to successfully train the networks. Given some coherent choice of datasets, perhaps it is necessary to impose extra inductive biases to steer the training into the right
direction.
The second reason could be that any structure between points in the
\textit{image space} is not a preservation of this structure in the latent
space. Even though the generator produces realistic-looking images of people
with and without glasses, perhaps the coherence of this mapping is too strong of
an assumption. We hypothesize that initial training of the generator \textit{with} this regularization condition could improve the results.
The third possible reason is that we simply have not stumbled upon the right choice of
hyperparameters. This would be a sign that the proposed network is
not as robust as expected.

\begin{figure}[H]
\begin{center}
  {\includegraphics[width=\linewidth]{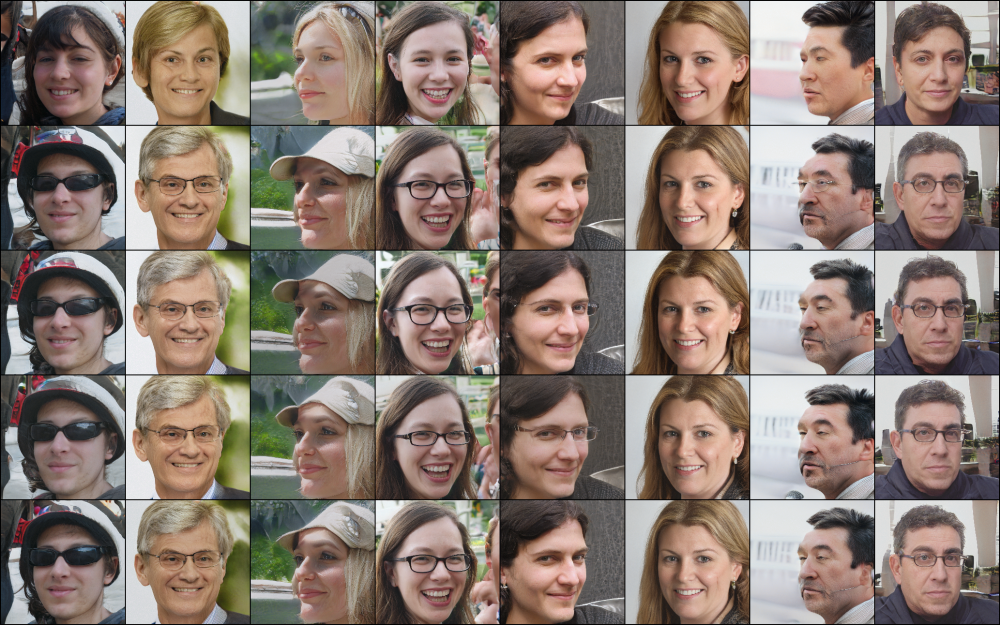}}
 \caption{Top row: input image $a_i$ of a person \textit{without} glasses. Every other
   row: the image $c(a_i, z_j)$, where $z_j$ is a per-row randomly sampled
   latent vector of glasses.
   Observe the failure of the model to learn to insert glasses on many of the
   faces. Even when it does learn to insert the glasses, it generates only
   minimal variation in glass shape, size and color, contrary to the expected.
   Furthermore, even though $z_j$ is fixed for each of the last three rows, the
   generated images contain substantially \textit{different} glasses for each person in a row.}
\label{fig:stylegan_a1}
\end{center}
\end{figure}

\begin{figure}[H]
\centering
\begin{subfigure}{1.0\linewidth}
  {\includegraphics[width=\linewidth]{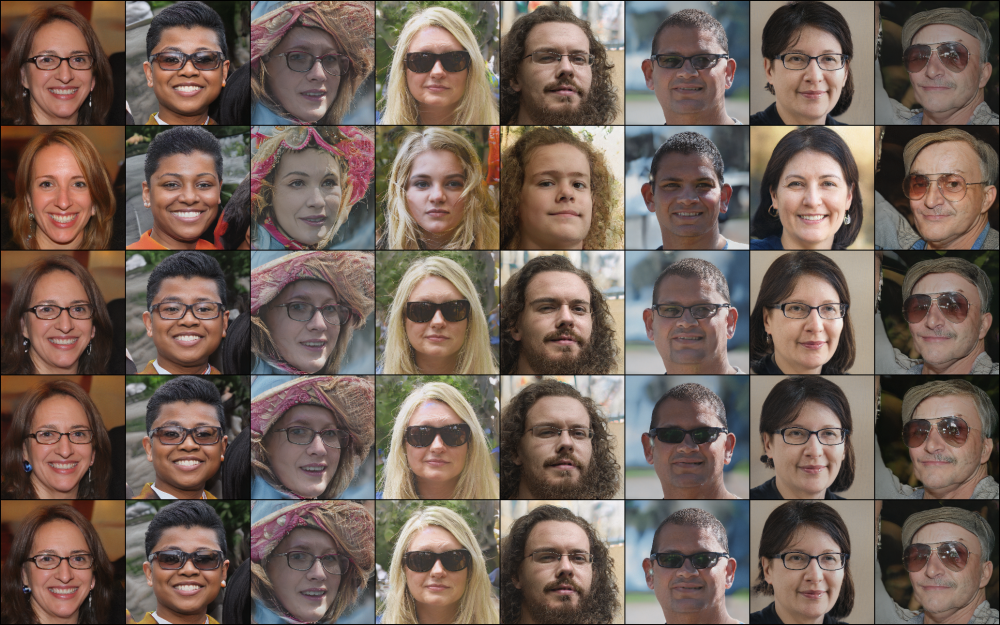}}
\caption{}
\end{subfigure}
\begin{subfigure}{1.0\linewidth}
  {\includegraphics[width=\linewidth]{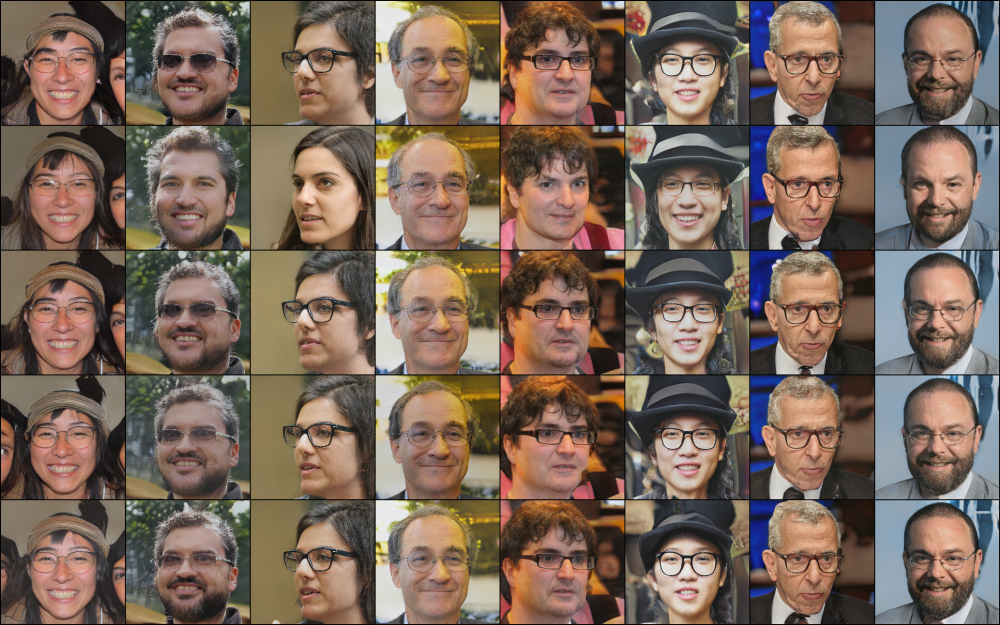}}
\caption{}
\end{subfigure}
 \caption{All subfigures depict the same mode on different input faces. Top row: input image $(ab)_i$ of a person with glasses; 2nd row:
   image $\pi_A(d((ab)_i))$ of that person after the glasses have been
   removed. Every other row: $c(\pi_A(d((ab)_i)), z_j)$ where $z_j$ is a per-row
   randomly sampled latent vector of glasses. Observe the failure of the model
 to coherently vary the generated glasses on a person as we change the latent
 vector $z_j$. 
Furthermore, even though $z_j$ is fixed for each of the last three rows, the
generated images contain substantially \textit{different} glasses for each
person in a row.}
\label{fig:stylegan_ab1}
\end{figure}

\begin{figure}[H]
\centering
\begin{subfigure}{1.0\linewidth}
  {\includegraphics[width=\linewidth]{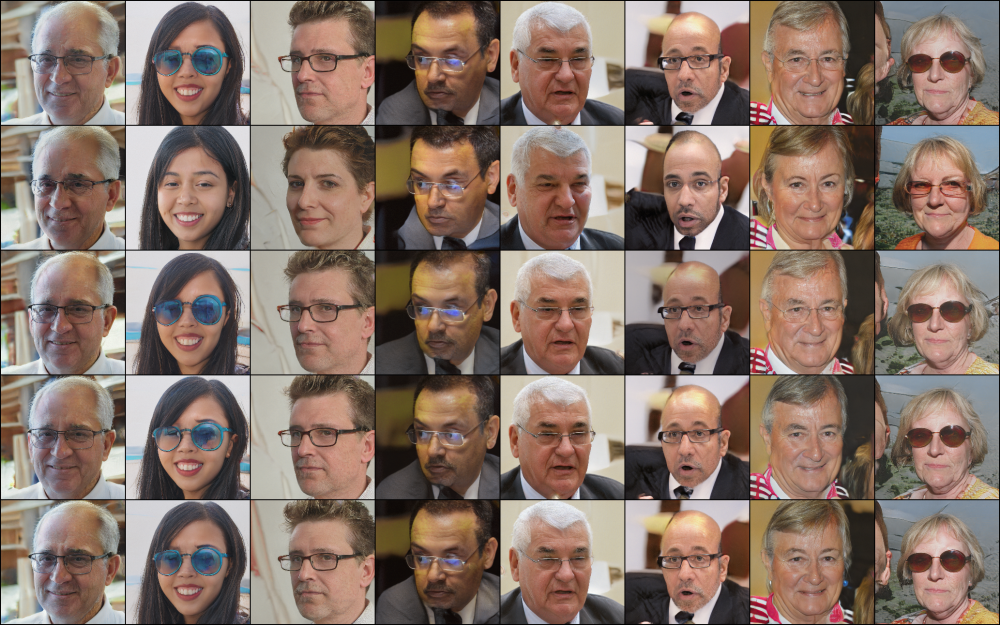}}
\caption{}
\end{subfigure}
\begin{subfigure}{1.0\linewidth}
  {\includegraphics[width=\linewidth]{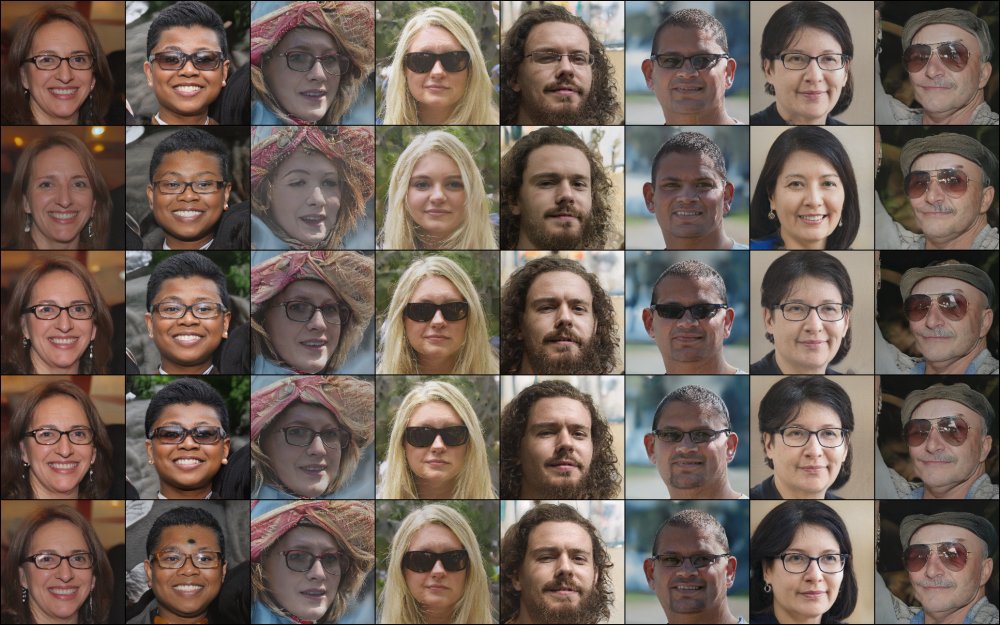}}
\caption{}
\end{subfigure}
 \caption{All subfigures depict the same mode on different input faces. Top row: input image $(ab)_i$ of a person with glasses; 2nd row:
   image $\pi_A(d((ab)_i))$ of that person after the glasses have been
   removed. Every other row: $c(\pi_A(d((ab)_i)), z_j)$ where $z_j$ is a per-row
   randomly sampled latent vector of glasses. Observe the failure of the model
 to coherently vary the generated glasses on a person as we change the latent
 vector $z_j$. 
Furthermore, even though $z_j$ is fixed for each of the last three rows, the generated images contain substantially \textit{different} glasses for each person in a row.
}
\label{fig:stylegan_ab2}
\end{figure}

\section{Experiment Summary}

With these three experiments, we have demonstrated the practical results obtained from
careful ascend into abstract categorical structures. In two out of the three
experiments we obtained promising results. Even though the last experiment
contains some samples of high quality in which careful removal and insertion of
glasses can be seen, we highlight that we have not been able to obtain uniformly positive
results.

With these experiments in mind, we believe the practical potential of
the product task has been shown. Simply by enforcing high-level rules about what ``removal''
and ``insertion'' mean in the product task, we have been able to train networks
to remove and insert objects in images without ever being told anything
about the nature of these images or objects in them.
Moreover, we highlight this is done with \textit{unpaired data}. As such, the high-level
regularization given by these path-equivalence losses proves to be a useful
training signal applicable to a wide variety of situations.

\subsubsection{Note on categorical schemas}

In this thesis our focus was on two simple schemas already given to us by
existing neural networks in Figure \ref{fig:birdseye}. In both cases there
exist clearly interpretable semantics and trainable network instances.
Even though the set of all possible schemas seems vast and potentially
interesting, we did not explore any other specific schemas.
It remains to be seen whether any other schemas capture new and interesting
semantics whose network instances can be efficiently trained.

%
%
%
%

\chapter{Conclusion}

In this thesis we have begun to draw an outline of the rich categorical structure
underpinning deep learning.
Even though the category theory used is elementary, it is a versatile
tool which we use to do many things, including the provision of a tangible result.
As such category theory did not just help with formality but has guided our thought toward interesting questions.

We highlight that this endeavour is far from finished, for two reasons.
The first reason is that, even this narrow domain, there are plenty of
structures left to be put in their rightful place. Many of the
constructions -- especially those towards the end of the thesis -- lose their pure
categorical flavor.
The second reason is that we focus on a narrow subfield of deep learning, namely generative
adversarial models in the domain of computer vision. Other areas such
as recurrent neural networks, variational autoencoders, optimization, and
meta-learning are beyond the scope of this thesis and provide plausible
directions in which this study could be continued.

We have not fully reaped the payoff of the categorical approach. It is our hope
that in the coming years this thesis can serve as a part of the foundation of
further work which uses category theory to reason about the way we can teach machines to reason.

\bibliography{diplomski}
\bibliographystyle{fer}

\break
\hrtitle{Kompozicijsko duboko učenje}
\begin{sazetak}
Neuronske mreže postaju sve popularniji alat za rješavanje mnogih problema iz
stvarnoga svijeta.
Neuronske mreže općenita su metoda za diferencijabilnu optimizaciju koja
uključuje mnoge druge algoritme strojnog učenja kao specijalne slučajeve.
U ovom diplomskom radu izloženi su početci formalnog kompozicijskog okvira za razumijevanje različitih komponenata modernih arhitektura neuronskih mreža.
Jezik teorije kategorija korišten je za proširenje postojećeg rada o
kompozicijskom nadziranom učenju na područja nenadziranog učenja i generativnih
modela.
Prevođenjem arhitektura neuronskih mreža, skupova podataka, mape parameter-funkcija
i nekolicine drugih koncepata iz dubokog učenja u kategorijski jezik,
pokazano je da se optimizacija može raditi u prostoru funktora između dvije
fiksne kategorije umjesto u prostoru funkcija između dva skupa.
Dajemo pregled znakovite poveznice između formulacije dubokog učenja u ovom
diplomskom radu i formulacije kategorijskih baza podataka.
Nadalje, koristimo navedenu kategorijsku formulaciju kako bismo osmislili novu
arhitekturu neuronskih mreža kojoj je cilj naučiti umetanje i brisanje
objekata iz slike sa neuparenim podacima. Ispitivanjem te arhitekture na tri skupa
podataka dobivamo obećavajuće rezultate.

\kljucnerijeci{Neuronske mreže, duboko učenje, teorija kategorija, generativne
  suparničke mreže}
\end{sazetak}

\begin{abstract}
Neural networks have become an increasingly popular tool for solving many
real-world problems.
They are a general framework for differentiable optimization which includes many
other machine learning approaches as special cases.
In this thesis we lay out the beginnings of a formal compositional framework for
reasoning about a number of components of modern neural network architectures.
The language of category theory is used to expand existing work on compositional
supervised learning into territories of unsupervised learning and generative
models. By translating neural network architectures, datasets,
parameter-function map, and a number of other concepts to the categorical setting,
we show optimization can be done in the functor space between two fixed
categories, rather than functions between two sets.
We outline a striking correspondence between the deep learning formulation in this thesis
and that of categorical database systems.
Furthermore, we use the category-theoretic framework to conceive a novel
neural network architecture whose goal is to learn the task of object insertion and
object deletion in images with unpaired data.
We test the architecture on three different datasets and obtain promising results.

\keywords{Neural networks, deep learning, category theory, generative
  adversarial networks}
\end{abstract}

\end{document}